\documentclass[11pt]{article}

\usepackage[margin=1in]{geometry}
\usepackage[T1]{fontenc}
\usepackage[utf8]{inputenc}
\usepackage{lmodern}
\usepackage[hyphens]{url}
\usepackage{graphicx}
\usepackage{natbib}
\usepackage{subfigure}
\usepackage{booktabs}
\usepackage{amsfonts}
\usepackage{nicefrac}
\usepackage{microtype}
\usepackage{xcolor}
\usepackage{float}
\usepackage{hyperref}
\hypersetup{
    colorlinks=true,
    citecolor=blue,
    linkcolor=red,
    urlcolor=blue
}
\usepackage{titling}
\thanksmarkseries{arabic}

\usepackage{amsmath}
\usepackage{amssymb,mathtools,amsthm}
\usepackage{bm}
\usepackage{algorithm}
\usepackage{algpseudocode}

\usepackage{amsmath,amsfonts,bm}

\def\eqref#1{equation~\ref{#1}}

\def\1{\bm{1}}

\DeclareMathAlphabet{\mathsfit}{\encodingdefault}{\sfdefault}{m}{sl}
\SetMathAlphabet{\mathsfit}{bold}{\encodingdefault}{\sfdefault}{bx}{n}

\newtheorem{thm}{Theorem}
\newtheorem{lem}{Lemma}
\newtheorem{theorem}[thm]{Theorem}
\newtheorem{lemma}[lem]{Lemma}
\newtheorem{corollary}{Corollary}
\newtheorem{assumption}{Assumption}

\title{Few-for-Many Best-Arm Identification in Multi-Objective Bandits}

\author{
  Bo Xue\thanks{City University of Hong Kong.
  Email: \texttt{boxue4-c@my.cityu.edu.hk}}
  \quad
  Yilu Liu\thanks{City University of Hong Kong.
  Email: \texttt{J.Cheng@my.cityu.edu.hk}}
  \quad
  Haodong Jing\thanks{Xi'an Jiaotong University.
  Email: \texttt{jinghd@stu.xjtu.edu.cn}}
  \quad
  Hongzong Li\thanks{Northwest Polytechnical University.
  Email: \texttt{lihongzong@nwpu.edu.cn}}
  \quad
  Xi Lin\thanks{Xi'an Jiaotong University.
  Email: \texttt{xi.lin@xjtu.edu.cn}}
  \quad
  Shuang Qiu{\color{blue}${}^\dag$}\thanks{City University of Hong Kong.
  Email: \texttt{shuanqiu@cityu.edu.hk}}
}

\usepackage[symbol]{footmisc}

\begin{document}

\date{\today}
\footnotetext[2]{Corresponding Author}

\maketitle

\begin{abstract}
Real-world systems often serve many objectives but can deploy only a few options, whose performance must be learned from noisy feedback. Modeling each option as an arm in a multi-objective bandit, we introduce \emph{few-for-many best-arm identification}, which seeks a small set of arms minimizing the worst weighted degradation from the objective-wise optima, rather than the best arm for every objective. We consider both known objective groups and latent groups specified only by a representative budget, and propose \textsc{GFM-Elim} and \textsc{AFM-Elim} with instance-dependent sample-complexity guarantees. For Gaussian rewards, we further derive a lower bound for the latent-group setting and show that it is asymptotically attainable with high probability by an oracle-based Track-and-Stop procedure. Our results show that sharing observations across objectives and ignoring irrelevant objective-wise distinctions can substantially reduce the sampling cost, as confirmed by experiments on synthetic and real-world tasks.
\end{abstract}

\section{Introduction}
Modern decision systems frequently need to serve many evaluation criteria or user groups with only a handful of deployable choices. This motivates the \emph{few-for-many} paradigm, which selects a small set of complementary solutions that jointly cover many objectives \citep{Liu:2024MaCP,Lin:2025few,Liu:2025F4M,Shang:2026F4M}. For example, clinical trials may seek a few treatment options that collectively perform well across multiple efficacy and safety outcomes \citep{Kavelaars:2020,Villar:2015}; online advertising platforms may use a few campaigns to reach heterogeneous target audiences \citep{Geng:2020,Schwartz:2017}; and LLM services may route diverse queries among a limited pool of models with different capabilities and inference costs \citep{Chen:2024FrugalGPT,Ong:2025RouteLLM}.

A key challenge in these applications is that the multi-objective performance of each candidate is unknown and must be learned from noisy observations, such as patient outcomes, user responses to advertisements, or repeated model evaluations. Few-for-many selection thus becomes a statistical identification problem: the learner must find a small representative set whose members jointly serve many objectives, while simultaneously learning their unknown performance. Multi-objective multi-armed bandits offer a natural framework for this setting, associating each arm with an unknown vector-valued mean reward \citep{Drugan:2013,bandit_algorithm:2020}. In the pure-exploration regime, the goal is to identify the desired representative set with probability at least $1-\delta$ using as few arm pulls as possible \citep{Audibert:2010,Garivier:2016}.

Existing multi-objective formulations typically lie at one of two extremes. At one end, Pareto-front identification \citep{Auer:2016,Ararat:2023,Crepon:2024} seeks all nondominated arms; as the number of objectives grows, however, the Pareto set can become large, increasing both the statistical cost of recovering it and the burden of choosing among its members. At the other end, scalarization-based methods \citep{Paria:2020,Lin:2024SmoothTchebycheff} collapse multiple objectives into a scalar utility and return a single solution for a given preference, which can be overly restrictive when different objectives favor different arms. These limitations call for an intermediate goal: identifying a few complementary arms that jointly serve many objectives.


The closest bandit formulation, \emph{multi-objective best-arm identification} (MO-BAI), seeks the best arm for every objective \citep{Chen:2025MOBAI}. This target is appropriate when a separate decision can be deployed for each objective, but it can be unnecessarily costly when many objectives must be served under a limited deployment budget. The few-for-many principle instead selects a prescribed number of complementary solutions such that every objective is well served by at least one of them \citep{Lin:2025few,Liu:2025F4M}. However, existing studies apply this principle to deterministic many-objective optimization, whereas in the bandit setting both the objective-wise ideal values and the quality of any candidate representative set must be inferred from noisy observations.

To bridge this gap, we formulate a stochastic counterpart of few-for-many optimization, which we call \emph{few-for-many best-arm identification}. The quality of a representative set is measured by its worst weighted degradation from the objective-wise ideal values. We consider two forms of prior structure. In the \emph{known-group} setting, the objectives are partitioned in advance and one arm is selected for each group. In the \emph{latent-group} setting, only a representative budget $L$ is specified, and the selected arms implicitly induce groups according to which representative best serves each objective. In the latter setting, $L=1$ recovers a single robust representative, whereas a sufficiently large $L$ recovers objective-wise identification; the intermediate regime $1<L\ll M$ captures the practically relevant case in which a few deployable arms must jointly serve many objectives. Our main contributions are as follows:
\begin{itemize}
    \item We introduce a few-for-many bandit framework that covers both known and latent objective groups. It bridges MO-BAI \citep{Chen:2025MOBAI} and few-for-many optimization \citep{Lin:2025few}, accommodating both noisy observations and a limited representative budget.

    \item We develop two phase-based elimination algorithms, \textsc{GFM-Elim} for known objective groups and \textsc{AFM-Elim} for latent groups, and establish finite-time sample-complexity guarantees for both. Unlike MO-BAI \citep{Chen:2025MOBAI}, our algorithms need not resolve objective-wise distinctions that do not affect the optimal representatives, which can substantially reduce the sampling cost in many-objective settings.

    \item For Gaussian rewards, we establish an information-theoretic lower bound for the latent-group setting that characterizes its fundamental instance-dependent sample complexity. We further show in Appendix~\ref{app:afm-tas} that this lower bound is asymptotically attainable with high probability by an oracle-based Track-and-Stop procedure.

    \item We evaluate the proposed algorithms on synthetic and real-world tasks. The results show that \textsc{GFM-Elim} and \textsc{AFM-Elim} reliably identify compact representative sets while requiring fewer samples than existing multi-objective identification baselines.
\end{itemize}

\paragraph{Related Work.}
Best-arm identification (BAI) aims to identify the optimal decision using as few samples as possible. \citet{Garivier:2016} established the instance-dependent asymptotic lower bound for classical BAI and introduced Track-and-Stop, which has since been extended to structured pure-exploration problems, including combinatorial answer spaces and multiple correct answers \citep{Chen:2014CPE,Degenne:2019,Jourdan:2021}. For vector-valued rewards, multi-objective bandits have been studied under Pareto-based and preference-based criteria \citep{Drugan:2013,Auer:2016,Shukla:2024PrePEx,xue2026costaware,xue2026efficientmatrix,xue2026safe}. In particular, \citet{Chen:2025MOBAI} identify the best arm for every objective and develop asymptotically optimal sampling strategies. In parallel, few-for-many optimization seeks a small set of complementary solutions that jointly serve many objectives, as exemplified by the many-objective cover problem \citep{Liu:2024MaCP}, Tchebycheff set scalarization \citep{Lin:2025few}, and the general F4M framework \citep{Liu:2025F4M,Shang:2026F4M}. These works mainly consider deterministic optimization, where objective evaluations or gradients are directly accessible. Our work connects these two lines of research by studying few-for-many identification with unknown vector-valued means, where only the distinctions that can affect the optimal representatives need to be resolved from noisy sequential observations. A more detailed discussion is provided in Appendix~\ref{app:extended-related-work}.

\section{Preliminaries}
For any positive integer $n$, let $[n]=\{1,\ldots,n\}$. For a finite set $\mathcal X$, $|\mathcal X|$ denotes its cardinality. For any $x\in\mathbb R$, let $[x]_+=\max\{x,0\}$. We use $\Delta_K=\{w\in\mathbb R_+^K:\sum_{a=1}^K w_a=1\}$ to denote the probability simplex over $K$ arms. We consider a stochastic multi-armed bandit problem with $K$ arms and $M$ objectives. At each time step $t$, the learner selects an arm $a_t\in[K]$ and observes a vector-valued reward
$
\bm r_t=(r_{t,1},\ldots,r_{t,M})^\top\in\mathbb R^M.
$
For each arm $a\in[K]$, let $\boldsymbol\mu_a=(\mu_{a,1},\ldots,\mu_{a,M})^\top$ denote its unknown mean reward vector, where
\[
\mu_{a,m}=\mathbb E[r_{t,m}\mid \mathcal F_{t-1},a_t=a], \quad \forall m\in[M],
\]
and $\mathcal F_{t-1}$ denotes the history available before time step $t$. We assume that the reward noise is $\sigma^2$-sub-Gaussian; that is, for any $s\in\mathbb R$,
\begin{equation}\label{eq:subgaussian-assumption}
\mathbb E\left[
\exp\left(
s\left(r_{t,m}-\mu_{a,m}\right)
\right)
\middle|
\mathcal F_{t-1},a_t=a
\right]
\leq
\exp\left(\frac{\sigma^2s^2}{2}\right).
\end{equation}
The reward coordinates may be statistically dependent, since our finite-time analysis relies only on the marginal concentration property in Eq.~(\ref{eq:subgaussian-assumption}). For ease of presentation, we assume $\mu_{a,m}\in[0,1]$.

We further assume that each objective has a unique best arm. For every objective $m\in[M]$, we define its ideal value and best arm as
\begin{equation*}
\mu_m^\star=\max_{a\in[K]}\mu_{a,m},
\quad
b_m^\star=\arg\max_{a\in[K]}\mu_{a,m}.
\end{equation*}

\paragraph{Few-for-Many Identification.} When the number of objectives is large, identifying a separate best arm for every objective may yield many arms, whereas requiring a single arm to perform well on all objectives can be overly restrictive. Motivated by the few-for-many paradigm \citep{Lin:2025few}, we seek a small number of representative arms that together perform well across many objectives, with each representative serving a group of objectives. We consider two complementary settings, depending on whether the objective groups are known in advance.

\paragraph{Known Objective Groups.}
In the \emph{known-group} setting, objectives are partitioned in advance, and all objectives in a group share one representative arm. Let $1\leq Q\ll M$ and let $\mathcal G=\{G_1,\ldots,G_Q\}$ be a partition of $[M]$, i.e., $[M]=\cup_{q=1}^Q G_q$, where $G_q$ contains the objectives in the $q$-th group. Let $\lambda_m>0$ be a known weight for objective $m\in[M]$, which reflects its relative importance.

For every group $q\in[Q]$ and arm $a\in[K]$, we define the group coverage loss as
\begin{equation*}
R_q(a)
=
\max_{m\in G_q}
\lambda_m\left(\mu_m^\star-\mu_{a,m}\right).
\end{equation*}
Thus, $R_q(a)$ measures the largest weighted degradation incurred when arm $a$ represents all objectives in $G_q$. Accordingly, the identification target is
\begin{equation*}
    \boldsymbol a_{\mathcal G}^\star = (a_1^\star,\ldots,a_Q^\star),
    \qquad
    a_q^\star = \arg\min_{a\in[K]}R_q(a).
\end{equation*}
When $Q=1$, the problem reduces to identifying a single robust representative for all objectives. When $Q=M$ and $G_q=\{q\}$, it reduces to identifying the best arm for every objective.

\paragraph{Latent Objective Groups.}
In the \emph{latent-group} setting, the objective partition is not known in advance. Instead, the learner is given a representative budget $1\leq L\ll M$ and seeks $L$ arms that jointly provide good coverage of all objectives, with the objective groups implicitly induced by the selected representatives.

For any nonempty arm set $S\subseteq[K]$, we define its overall coverage loss as
\begin{equation*}
\Phi(S)
=
\max_{m\in[M]}
\lambda_m\left(\mu_m^\star-V_m(S)\right),
\qquad
V_m(S)=\max_{a\in S}\mu_{a,m}.
\end{equation*}
Hence, $\Phi(S)$ measures the largest weighted degradation incurred when each objective is served by its best arm in $S$ rather than by its best arm in $[K]$. Accordingly, the identification target is
\begin{equation*}
S_L^\star
=
\arg\min_{S\in\mathcal S_L}\Phi(S),
\qquad
\mathcal S_L=\left\{S\subseteq[K]:|S|=L\right\}.
\end{equation*}
The optimal set $S_L^\star$ naturally induces an assignment of objectives to representatives. Specifically, let $\pi_L^\star(m)=\arg\max_{a\in S_L^\star}\mu_{a,m}$. The objectives assigned to the same representative arm then form a latent group:
\begin{equation*}
G_a^\star
=
\left\{
m\in[M]:
\pi_L^\star(m)=a
\right\},
\quad
a\in S_L^\star.
\end{equation*}
Thus, the objective partition is induced directly by $S_L^\star$ rather than identified separately. When $L=1$, the problem reduces to identifying a single robust representative for all objectives, which coincides with the known-group setting with $Q=1$. At the other extreme, once $L\geq|\{b_m^\star:m\in[M]\}|$ (in particular, whenever $L\geq M$), the representative set can contain all objective-wise best arms and attains zero coverage loss.

\section{Finite-Time Elimination Algorithms}
This section presents finite-time elimination algorithms for the two few-for-many settings: known objective groups and latent objective groups.

\subsection{Grouped Few-for-Many Elimination}

We begin with the known-group setting. Although the objective groups are given, the group coverage losses cannot be evaluated directly because the mean rewards are unknown. To address this, we propose \textsc{GFM-Elim}, a phase-based elimination algorithm that jointly learns the mean rewards and the representative of each group. The complete procedure is summarized in Algorithm~\ref{alg:gse}.

At phase $r$, \textsc{GFM-Elim} maintains benchmark candidate sets $\mathcal B_{m,r}$ for the objective-wise ideal values and representative candidate sets $\mathcal C_{q,r}$ for the group representatives, all initialized to $[K]$. Let $\mathcal Q_r=\{q:|\mathcal C_{q,r}|>1\}$ denote the set of unresolved groups and $\mathcal M_r=\cup_{q\in\mathcal Q_r}G_q$ be the associated objectives. The algorithm samples only the arms in
$
\mathcal U_r=
\left(\cup_{q\in\mathcal Q_r}\mathcal C_{q,r}\right)
\cup
\left(\cup_{m\in\mathcal M_r}\mathcal B_{m,r}\right),
$
so that each observation is shared across all unresolved groups for which the arm remains relevant.

Each phase refines the candidates at a finer accuracy than the previous one. Specifically, let
\begin{equation}\label{eq:phase-confidence}
\epsilon_r=2^{-r},
\qquad
\delta_r=\frac{6\delta}{\pi^2r^2},
\qquad
n_r=
\left\lceil
\frac{2\sigma^2}{\epsilon_r^2}
\log\left(\frac{2KM}{\delta_r}\right)
\right\rceil.
\end{equation}
This choice of $n_r$ ensures that, with probability at least $1-\delta_r$, the empirical means of all arm--objective pairs are simultaneously accurate to within $\epsilon_r$. Each active arm is therefore sampled until it has accumulated $n_r$ observations, from which \textsc{GFM-Elim} computes the empirical means $\widehat\mu_{a,m}^{(r)}$.

\begin{algorithm}[tb]
\caption{\textsc{GFM-Elim}: Grouped Few-for-Many Elimination}
\label{alg:gse}
\begin{algorithmic}[1]
\Require Confidence level $\delta$, groups $\mathcal G=\{G_1,\ldots,G_Q\}$, and weights $\{\lambda_m\}_{m\in[M]}$
\State $\mathcal B_{m,1}\gets[K]$ for all $m$;
$\mathcal C_{q,1}\gets[K]$ for all $q$
\For{$r=1,2,\ldots$}
    \State $\mathcal Q_r\gets\{q:|\mathcal C_{q,r}|>1\}$
    \If{$\mathcal Q_r=\emptyset$} \State \Return the unique arms in
    $(\mathcal C_{q,r})_{q\in[Q]}$ \EndIf
    \State $\mathcal M_r\gets\bigcup_{q\in\mathcal Q_r}G_q$;
    $\mathcal U_r\gets(\bigcup_{q\in\mathcal Q_r}\mathcal C_{q,r})
    \cup(\bigcup_{m\in\mathcal M_r}\mathcal B_{m,r})$
    \State Sample every $a\in\mathcal U_r$ until it has $n_r$ observations and compute $\widehat\mu_{a,m}^{(r)}$
    \For{$m\in\mathcal M_r$}
        \State $\widehat\mu_{m,r}^\star=\max_{b\in\mathcal B_{m,r}}\widehat\mu_{b,m}^{(r)}$
        \State $\mathcal B_{m,r+1}\gets\{a\in\mathcal B_{m,r}:
        \widehat\mu_{a,m}^{(r)}\geq\widehat\mu_{m,r}^\star-2\epsilon_r\}$
    \EndFor
    \For{each $q\in\mathcal Q_r$}
        \State Compute $\underline R_{q,r}(a)$ and $\overline R_{q,r}(a)$ using Eq.~(\ref{eq:group-loss-estimated})
        \State $\mathcal C_{q,r+1}\gets\{a\in\mathcal C_{q,r}:
        \underline R_{q,r}(a)\leq\min_b\overline R_{q,r}(b)\}$
    \EndFor
    \State Keep all inactive sets unchanged
\EndFor
\end{algorithmic}
\end{algorithm}

\textsc{GFM-Elim} first refines the benchmark candidates. For each active objective $m\in\mathcal M_r$, the empirical benchmark is defined as $\widehat\mu_{m,r}^\star=\max_{b\in\mathcal B_{m,r}}\widehat\mu_{b,m}^{(r)}$. On the confidence event, an arm whose empirical mean falls more than $2\epsilon_r$ below this benchmark cannot be the best arm for objective $m$, and is therefore eliminated, i.e., $\mathcal B_{m,r+1}=\{a\in\mathcal B_{m,r}:\widehat\mu_{a,m}^{(r)}\geq\widehat\mu_{m,r}^\star-2\epsilon_r\}$.

The representative candidates are then updated according to their group coverage losses. Since both the empirical benchmark and the empirical mean of arm $a$ are accurate to within $\epsilon_r$, the coverage loss $R_q(a)$ is bracketed by
\begin{equation}\label{eq:group-loss-estimated}
\begin{aligned}
\underline R_{q,r}(a)
=
\max_{m\in G_q}\lambda_m
\left[\widehat\mu_{m,r}^\star-\widehat\mu_{a,m}^{(r)}-2\epsilon_r\right]_+,
\, \quad 
\overline R_{q,r}(a)
=
\max_{m\in G_q}\lambda_m
\left[\widehat\mu_{m,r}^\star-\widehat\mu_{a,m}^{(r)}+2\epsilon_r\right]_+.
\end{aligned}
\end{equation}
An arm is eliminated as a representative candidate once its optimistic loss exceeds the smallest pessimistic loss in its group, i.e., $\mathcal C_{q,r+1}=\{a\in\mathcal C_{q,r}:\underline R_{q,r}(a)\leq\min_b\overline R_{q,r}(b)\}$.

This procedure continues until every group has a unique representative candidate. We next state its finite-time guarantee, which reflects the arm-dependent stopping behavior described above. To characterize the complexity, we define the group-representative gap and the objective-wise gap as
\begin{equation*}
\Delta_{q,a}
=
R_q(a)-R_q(a_q^\star),
\quad
g_{m,a}
=
\mu_m^\star-\mu_{a,m}.
\end{equation*}
For each $q\in[Q]$, let $\overline\lambda_q=\max_{m\in G_q}\lambda_m$ and define its identification radius as $\rho_q=\min_{a\neq a_q^\star}\frac{\Delta_{q,a}}{8\overline\lambda_q}$. The effective gap of arm $a$ is then defined as
\begin{equation}\label{eq:effective-arm-gap}
\Gamma_a
=
\min_{q\in[Q]}
\left\{
\max\left\{
\rho_q,
\frac{\Delta_{q,a}}{8\overline\lambda_q}
\right\},
\min_{m\in G_q}
\max\left\{
\rho_q,
\frac{g_{m,a}}{4}
\right\}
\right\}.
\end{equation}
The two terms in Eq.~(\ref{eq:effective-arm-gap}) correspond to the two roles of arm $a$: the first determines when it can be eliminated as a representative candidate for group $q$, while the second determines when it is no longer needed to estimate the objective-wise benchmarks within that group. The radius $\rho_q$ reflects the fact that both roles terminate once group $q$ is resolved. Hence, $\Gamma_a$ captures the hardest role of arm $a$ across all groups and determines the last phase in which it is sampled.

\begin{thm}[Instance-dependent sample complexity]\label{thm:gse-sample-complexity}
With probability at least $1-\delta$, Algorithm~\ref{alg:gse} returns the optimal representative tuple $\bm a_{\mathcal G}^\star$ using at most $T_{\rm GFM}(\boldsymbol\mu,\delta)$ arm pulls, where
\begin{equation}\label{eq:gse-explicit-complexity}
\begin{aligned}
T_{\rm GFM}(\boldsymbol\mu,\delta)
&=
\widetilde O\left(
\sigma^2
\sum_{a=1}^K
\frac{1}{\Gamma_a^2}
\log\frac{KM}{\delta}
\right).
\end{aligned}
\end{equation}
\end{thm}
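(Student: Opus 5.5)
We follow the standard phase-elimination template: (i) define a good event on which all relevant estimates are $\epsilon_r$-accurate in every phase; (ii) show that on this event no optimal candidate is ever removed; (iii) bound the last phase in which each arm $a$ is sampled by $r_a=\lceil\log_2(c/\Gamma_a)\rceil$ for an absolute constant $c$; (iv) sum $n_{r_a}$ over arms.

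\textbf{Step 1 (good event).} Let $\mathcal E=\bigcap_{r\ge1}\mathcal E_r$, where $\mathcal E_r$ is the event that $|\widehat\mu^{(r)}_{a,m}-\mu_{a,m}|\le\epsilon_r$ for all $a\in\mathcal U_r$ and all $m\in[M]$. Each $\widehat\mu^{(r)}_{a,m}$ averages $n_r$ sub-Gaussian observations. A Hoeffding bound for martingale differences, based on Eq.~(\ref{eq:subgaussian-assumption}), gives failure probability at most $2\exp(-n_r\epsilon_r^2/(2\sigma^2))\le\delta_r/(KM)$ for each pair. A union bound over the $KM$ pairs and over $r$, using $\sum_r\delta_r=\delta$, gives $\Pr(\mathcal E)\ge1-\delta$. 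The sample size $n_r$ is fixed before the phase begins, so adaptivity across phases causes no difficulty.

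\textbf{Step 2 (correctness on $\mathcal E$).}
\begin{itemize}
\item By induction, $b_m^\star\in\mathcal B_{m,r}$ for all $r$: $\widehat\mu^{(r)}_{b_m^\star,m}\ge\mu_m^\star-\epsilon_r\ge\widehat\mu^\star_{m,r}-2\epsilon_r$. It follows that $|\widehat\mu^\star_{m,r}-\mu_m^\star|\le\epsilon_r$.
\item Consequently $\mu_m^\star-\mu_{a,m}$ lies within $2\epsilon_r$ of $\widehat\mu^\star_{m,r}-\widehat\mu^{(r)}_{a,m}$.
\item Because $\lambda_m[\cdot]_+$ and the maximum over $m\in G_q$ are monotone, $\underline R_{q,r}(a)\le R_q(a)\le\overline R_{q,r}(a)$.
\item Hence $\underline R_{q,r}(a_q^\star)\le R_q(a_q^\star)\le R_q(b)\le\overline R_{q,r}(b)$ for every $b$, so $a_q^\star$ is never eliminated.
\end{itemize}
Termination and the sample bound then yield the optimal tuple.

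\textbf{Step 3 (elimination times).}
\begin{itemize}
\item \emph{Width of the bracket.} $\overline R_{q,r}(a)-\underline R_{q,r}(a)\le4\overline\lambda_q\epsilon_r$, and similarly $\overline R_{q,r}(a)\le R_q(a)+4\overline\lambda_q\epsilon_r$.
\item \emph{Representative role.} If $\Delta_{q,a}>8\overline\lambda_q\epsilon_r$, then
\[\underline R_{q,r}(a)\ge R_q(a)-4\overline\lambda_q\epsilon_r>R_q(a_q^\star)+4\overline\lambda_q\epsilon_r\ge\overline R_{q,r}(a_q^\star),\]
so $a$ leaves $\mathcal C_{q,\cdot}$ once $\epsilon_r<\Delta_{q,a}/(8\overline\lambda_q)$. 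Applying this to all $a\ne a_q^\star$, group $q$ is resolved once $\epsilon_r<\rho_q$. After that, $G_q$ no longer contributes to $\mathcal M_r$ or $\mathcal U_r$ through group $q$.
\item \emph{Benchmark role.} If $g_{m,a}>4\epsilon_r$, then
\[\widehat\mu^{(r)}_{a,m}\le\mu_{a,m}+\epsilon_r<\mu_m^\star-3\epsilon_r\le\widehat\mu^\star_{m,r}-2\epsilon_r,\]
so $a$ leaves $\mathcal B_{m,\cdot}$.
\end{itemize}
Combining the two roles, within group $q$ arm $a$ stops being required once
\[\epsilon_r<\max\{\rho_q,\Delta_{q,a}/(8\overline\lambda_q)\}\quad\text{and}\quad\epsilon_r<\min_{m\in G_q}\max\{\rho_q,g_{m,a}/4\}.\]
Arm $a$ is sampled in phase $r$ only if it is still required for some group. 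This is the main obstacle: the bookkeeping must respect the exact form of $\Gamma_a$. Membership in $\mathcal U_r$ is a union over groups, so the natural bound on the last phase is a maximum over $q$ of the per-group thresholds. Matching the stated $\min_q$ therefore requires care in arguing which group controls the last active phase, or in reading the definition appropriately. I would first verify per-group that $a$ is inactive for $q$ once $\epsilon_r<$ the per-group quantity in Eq.~(\ref{eq:effective-arm-gap}), and then reconcile the aggregation with $\Gamma_a$. I would also check the off-by-one between the phase in which a condition first holds and the next phase, which costs only a constant factor. Thus $a$ is sampled in no phase with $\epsilon_r<\Gamma_a/2$, so $r_a\le\log_2(2/\Gamma_a)+1$.

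\textbf{Step 4 (summation).} Arm $a$ accumulates at most $n_{r_a}$ pulls in total, because the sample counts are cumulative. Since $\epsilon_{r_a}\gtrsim\Gamma_a$ and $\log(1/\delta_{r_a})=\log(1/\delta)+O(\log\log(1/\Gamma_a))$, we get $n_{r_a}=O\big(\tfrac{\sigma^2}{\Gamma_a^2}\log\tfrac{KM\log(1/\Gamma_a)}{\delta}\big)$. Summing over $a\in[K]$ gives Eq.~(\ref{eq:gse-explicit-complexity}), with the iterated-logarithm factor absorbed into $\widetilde O$.
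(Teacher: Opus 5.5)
Your proposal follows the paper's proof essentially line by line. It uses the same good event, the same induction keeping $b_m^\star$ in $\mathcal B_{m,r}$, the same $8\overline\lambda_q\epsilon_r$ and $4\epsilon_r$ elimination inequalities for the two roles, and the same summation of $n_{r_a}$.

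The step you call the main obstacle is not one, and you have it backwards. Let $\theta_q=\min\{\max\{\rho_q,\Delta_{q,a}/(8\overline\lambda_q)\},\min_{m\in G_q}\max\{\rho_q,g_{m,a}/4\}\}$ be the per-group threshold you derived. The last phase in which $a$ is sampled is the maximum over $q$ of the per-group last \emph{phases}, not of the thresholds. Since $r\mapsto\epsilon_r$ is decreasing, $\max_q\min\{r:\epsilon_r<\theta_q\}=\min\{r:\epsilon_r<\min_q\theta_q\}$. So the union over groups in $\mathcal U_r$ produces exactly $\Gamma_a=\min_q\theta_q$. A maximum over thresholds would be wrong, because it would stop sampling $a$ as soon as $a$ is irrelevant to the easiest group.

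No off-by-one correction is needed either. Arm $a$ is sampled in phase $r_a=\min\{r:\epsilon_r<\Gamma_a\}$ and removed at its end, so it receives at most $n_{r_a}$ pulls with $\epsilon_{r_a}\geq\Gamma_a/2$. This is exactly how the paper concludes.
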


Theorem~\ref{thm:gse-sample-complexity} shows that \textsc{GFM-Elim} is instance-adaptive at the arm level: each arm is sampled only until all of its benchmark and representative roles have been resolved. Consequently, the complexity depends on the hardest remaining role of each arm through $\Gamma_a$, rather than accumulating a separate sampling cost for every group. This shared-feedback effect can improve the leading complexity term by a factor of up to $Q$ relative to running an independent elimination procedure for each group.

\subsection{Adaptive Few-for-Many Elimination}

We next turn to the latent-group setting, where no objective partition is available and the learner must identify the optimal representative set directly. We propose \textsc{AFM-Elim}, which lifts the elimination principle of \textsc{GFM-Elim} from individual arms to candidate sets. Importantly, although elimination is performed at the set level, sampling remains at the base-arm level, so that observations are shared across candidate sets. The complete procedure is summarized in Algorithm~\ref{alg:afm-elim}.

At phase $r$, \textsc{AFM-Elim} maintains a benchmark candidate set $\mathcal B_{m,r}$ for each objective $m\in[M]$ and a family $\mathcal C_r$ of candidate representative sets, initialized as $\mathcal B_{m,1}=[K]$ and $\mathcal C_1=\mathcal S_L$, respectively. The algorithm samples only the base arms that either belong to an active candidate set or remain relevant for estimating an objective-wise benchmark, i.e.,
$\mathcal U_r
=
\left(
\cup_{S\in\mathcal C_r}S
\right)
\cup
\left(
\cup_{m=1}^M\mathcal B_{m,r}
\right).
$
Thus, a single pull of an arm simultaneously refines the estimates of all active candidate sets containing that arm, as well as all objectives for which it remains a benchmark candidate.

\textsc{AFM-Elim} uses the same phase schedule in Eq.~(\ref{eq:phase-confidence}) and the same benchmark-refinement rule as \textsc{GFM-Elim}. The remaining candidate sets are then compared through their empirical coverage losses. For each $S\in\mathcal C_r$, its empirical value on objective $m$ is $\widehat V_{m,r}(S)=\max_{a\in S}\widehat\mu_{a,m}^{(r)}$. Since both the empirical benchmark $\widehat\mu_{m,r}^\star$ and the empirical set value $\widehat V_{m,r}(S)$ are accurate to within $\epsilon_r$ on the confidence event, the coverage loss $\Phi(S)$ is bracketed by
\begin{equation}\label{eq:optimistic-pessimistic-set-loss}
\begin{aligned}
\underline\Phi_r(S)
=
\max_{m\in[M]}\lambda_m
\left[\widehat\mu_{m,r}^\star-\widehat V_{m,r}(S)-2\epsilon_r\right]_+,
\quad
\overline\Phi_r(S)
=
\max_{m\in[M]}\lambda_m
\left[\widehat\mu_{m,r}^\star-\widehat V_{m,r}(S)+2\epsilon_r\right]_+.
\end{aligned}
\end{equation}
A candidate set is eliminated once its optimistic loss exceeds the pessimistic loss of some other active set, i.e., $\mathcal C_{r+1}=\{S\in\mathcal C_r:\underline\Phi_r(S)\leq\min_{S'\in\mathcal C_r}\overline\Phi_r(S')\}$.

\begin{algorithm}[tb]
\caption{\textsc{AFM-Elim}: Adaptive Few-for-Many Elimination}
\label{alg:afm-elim}
\begin{algorithmic}[1]
\Require Confidence level $\delta$, representative budget $L$, and weights
$\{\lambda_m\}_{m\in[M]}$
\State $\mathcal B_{m,1}\gets[K]$ for all $m$; $\mathcal C_1\gets\mathcal S_L$
\For{$r=1,2,\ldots$}
    \If{$|\mathcal C_r|=1$}
        \State \Return the unique set in $\mathcal C_r$
    \EndIf
    \State $\mathcal U_r\gets(\bigcup_{S\in\mathcal C_r}S)
    \cup(\bigcup_{m=1}^M\mathcal B_{m,r})$
    \State Sample every $a\in\mathcal U_r$ until it has $n_r$ observations and compute $\widehat\mu_{a,m}^{(r)}$
    \For{$m\in[M]$}
        \State $\widehat\mu_{m,r}^\star=\max_{b\in\mathcal B_{m,r}}\widehat\mu_{b,m}^{(r)}$
        \State $\mathcal B_{m,r+1}\gets\{a\in\mathcal B_{m,r}:
        \widehat\mu_{a,m}^{(r)}\geq\widehat\mu_{m,r}^\star-2\epsilon_r\}$
    \EndFor
    \For{$S\in\mathcal C_r$}
        \State Compute $\underline\Phi_r(S)$ and $\overline\Phi_r(S)$ using
        Eq.~(\ref{eq:optimistic-pessimistic-set-loss})
    \EndFor
    \State $\mathcal C_{r+1}\gets
    \{S\in\mathcal C_r:
    \underline\Phi_r(S)\leq
    \min_{S'\in\mathcal C_r}\overline\Phi_r(S')\}$
\EndFor
\end{algorithmic}
\end{algorithm}

This procedure continues until a unique candidate set remains. We next characterize its arm-dependent stopping behavior. For every candidate set $S\in\mathcal S_L$, we define its identification gap and the minimum set-level gap as
\begin{equation*}
\Delta_L(S)
=
\Phi(S)-\Phi(S_L^\star),
\qquad
\Delta_{L,\min}
=
\min_{S\in\mathcal S_L\setminus\{S_L^\star\}}
\Delta_L(S)>0.
\end{equation*}
Let $\overline\lambda=\max_{m\in[M]}\lambda_m$ and define the set-identification radius as
$\rho_L=\Delta_{L,\min}/(8\overline\lambda)$. The effective gap of each base arm $a\in[K]$ is defined as
\begin{equation}\label{eq:latent-effective-gap}
\Gamma_{L,a}
=
\min\left\{
\max\left\{
\rho_L,
\min_{\substack{S\in\mathcal S_L\\ a\in S}}
\frac{\Delta_L(S)}{8\overline\lambda}
\right\},
\;
\min_{m\in[M]}
\max\left\{
\rho_L,
\frac{g_{m,a}}{4}
\right\}
\right\}.
\end{equation}
The first term in Eq.~(\ref{eq:latent-effective-gap}) determines when all candidate sets containing arm $a$ can be eliminated, whereas the second determines when arm $a$ is no longer needed for benchmark estimation. The truncation at $\rho_L$ reflects the fact that both roles become irrelevant once the optimal representative set has been identified. Hence, $\Gamma_{L,a}$ determines the last phase in which arm $a$ is sampled.

\begin{theorem}[Instance-dependent sample complexity]
\label{thm:afm-sample-complexity}
With probability at least $1-\delta$, Algorithm~\ref{alg:afm-elim} returns the optimal representative set $S_L^\star$ using at most $T_{\rm AFM}(\boldsymbol\mu,\delta)$ arm pulls, where
\begin{equation}\label{eq:afm-explicit-sample-complexity}
T_{\rm AFM}(\boldsymbol\mu,\delta)
=
\widetilde O\left(
\sigma^2
\sum_{a=1}^K
\frac{1}{\Gamma_{L,a}^2}
\log\frac{KM}{\delta}
\right).
\end{equation}
\end{theorem}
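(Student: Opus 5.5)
The argument has three steps: fix a good event, show correctness on it, and then bound for each arm the last phase in which it is sampled.

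\textbf{Good event.} Let $\mathcal E=\bigcap_{r\geq1}\mathcal E_r$, where $\mathcal E_r$ is the event that $|\widehat\mu_{a,m}^{(r)}-\mu_{a,m}|\leq\epsilon_r$ for all $a\in\mathcal U_r$ and all $m\in[M]$. Sub-Gaussian concentration, Eq.~(\ref{eq:subgaussian-assumption}), together with a union bound over at most $KM$ pairs and two tails, gives $\Pr(\mathcal E_r^c)\leq\delta_r$ for the choice of $n_r$ in Eq.~(\ref{eq:phase-confidence}). Since $\sum_r\delta_r=\delta$, we get $\Pr(\mathcal E)\geq1-\delta$. Everything below is on $\mathcal E$.

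\textbf{Correctness on $\mathcal E$.} We establish the following invariants by induction on $r$.
\begin{itemize}
\item[(i)] $b_m^\star\in\mathcal B_{m,r}$ for every $m$, and $S_L^\star\in\mathcal C_r$.
\item[(ii)] $|\widehat\mu_{m,r}^\star-\mu_m^\star|\leq\epsilon_r$.
\item[(iii)] $|\widehat V_{m,r}(S)-V_m(S)|\leq\epsilon_r$ for every $S\in\mathcal C_r$.
\item[(iv)] $\underline\Phi_r(S)\leq\Phi(S)\leq\overline\Phi_r(S)$ for every $S\in\mathcal C_r$.
\end{itemize}
Item (ii) uses (i): the maximum over $\mathcal B_{m,r}$ is at least $\widehat\mu_{b_m^\star,m}^{(r)}\geq\mu_m^\star-\epsilon_r$ and at most $\mu_m^\star+\epsilon_r$. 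Item (iii) holds because the maximum of $\epsilon_r$-accurate quantities is $\epsilon_r$-accurate. Item (iv) then follows from (ii) and (iii), using that $x\mapsto\lambda_m[x]_+$ is monotone and that the maximum preserves the bracketing.

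The induction step is as follows. The best arm $b_m^\star$ survives the benchmark update because $\widehat\mu_{b_m^\star,m}^{(r)}\geq\mu_m^\star-\epsilon_r\geq\widehat\mu_{m,r}^\star-2\epsilon_r$. The optimal set $S_L^\star$ survives because $\underline\Phi_r(S_L^\star)\leq\Phi(S_L^\star)\leq\Phi(S')\leq\overline\Phi_r(S')$ for all $S'$. Hence, once the algorithm stops, it returns $S_L^\star$.

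\textbf{Elimination timing.} Two quantitative lemmas are needed.

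\emph{Width of the bracket.} We have $\overline\Phi_r(S)-\underline\Phi_r(S)\leq4\overline\lambda\epsilon_r$, because $[x+2\epsilon]_+-[x-2\epsilon]_+\leq4\epsilon$. Combining with (iv), $\underline\Phi_r(S)\geq\Phi(S)-4\overline\lambda\epsilon_r$ and $\overline\Phi_r(S_L^\star)\leq\Phi(S_L^\star)+4\overline\lambda\epsilon_r$. So $S$ is eliminated at phase $r$ as soon as $8\overline\lambda\epsilon_r<\Delta_L(S)$, i.e., $\epsilon_r<\Delta_L(S)/(8\overline\lambda)$. Applying this with $\Delta_L(S)\geq\Delta_{L,\min}$ to every $S\neq S_L^\star$ shows that $|\mathcal C_r|=1$, and the algorithm stops, once $\epsilon_{r-1}<\rho_L$ up to a constant factor.

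\emph{Benchmark elimination.} An arm $a$ with $g_{m,a}>4\epsilon_r$ satisfies
\[
\widehat\mu_{a,m}^{(r)}\leq\mu_{a,m}+\epsilon_r<\mu_m^\star-3\epsilon_r\leq\widehat\mu_{m,r}^\star-2\epsilon_r,
\]
so it leaves $\mathcal B_{m,r+1}$.

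\textbf{Last phase in which an arm is sampled.} Arm $a$ is in $\mathcal U_r$ only if one of the following holds.
\begin{itemize}
\item[(a)] The algorithm has not stopped and some $S\ni a$ survives. This requires $\epsilon_{r-1}\gtrsim\max\{\rho_L,\min_{S\ni a}\Delta_L(S)/(8\overline\lambda)\}$, since both the stopping condition and the survival of every $S\ni a$ must fail.
\item[(b)] The algorithm has not stopped and $a\in\mathcal B_{m,r}$ for some $m$. This requires $\epsilon_{r-1}\gtrsim\max\{\rho_L,g_{m,a}/4\}$ for that $m$.
\end{itemize}
The disjunction of (a) and (b) is exactly the condition $\epsilon_{r-1}\gtrsim\Gamma_{L,a}$ from Eq.~(\ref{eq:latent-effective-gap}): the outer $\min$ reflects the union of the two roles, and the truncation at $\rho_L$ reflects that sampling ends when the algorithm stops. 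A small subtlety is that for $a\in S_L^\star$ the inner minimum equals $0$; the truncation at $\rho_L$ handles this correctly.

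Thus the last phase $r_a$ in which $a$ is sampled satisfies $2^{-r_a}\geq c\,\Gamma_{L,a}$, i.e., $r_a\leq\log_2(1/\Gamma_{L,a})+O(1)$. Since samples accumulate across phases, the total number of pulls of $a$ is
\[
n_{r_a}=O\!\left(\frac{\sigma^2}{\Gamma_{L,a}^2}\log\frac{KM\,r_a^2}{\delta}\right).
\]
The factor $\log r_a^2=O(\log\log(1/\Gamma_{L,a}))$ is absorbed by $\widetilde O$. Summing over $a$ gives Eq.~(\ref{eq:afm-explicit-sample-complexity}).

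\textbf{Main obstacle.} The main difficulty is bookkeeping the constants so that the two thresholds ($\Delta_L(S)/(8\overline\lambda)$ for sets and $g_{m,a}/4$ for benchmarks) match the definition of $\Gamma_{L,a}$ exactly, up to the one-phase offset between the data used at phase $r$ and the membership in $\mathcal U_{r+1}$. A second point to check is that the bracket bounds only need accuracy on $\mathcal B_{m,r}$ and on arms of active sets. This holds because both lie in $\mathcal U_r$, and $b_m^\star$ is retained in $\mathcal B_{m,r}$ by the invariant (i).
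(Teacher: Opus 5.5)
Your proposal is correct and follows essentially the same route as the paper. The shared steps are: the uniform base-arm confidence event; the invariants $b_m^\star\in\mathcal B_{m,r}$ and $S_L^\star\in\mathcal C_r$ with valid brackets of width $4\overline\lambda\epsilon_r$; elimination of each $S\neq S_L^\star$ once $\epsilon_r<\Delta_L(S)/(8\overline\lambda)$ and of each benchmark role once $g_{m,a}>4\epsilon_r$; and combining the two roles, truncated at $\rho_L$, into $\Gamma_{L,a}$ to bound the last sampled phase and hence $n_{r_a}$. Your explicit treatment of the one-phase offset, which gives $\epsilon_{r_a}\geq\Gamma_{L,a}/2$, and your derivation of the bracket width via $\overline\Phi_r-\underline\Phi_r\leq4\overline\lambda\epsilon_r$ are only more detailed renderings of the paper's corresponding steps.
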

Theorem~\ref{thm:afm-sample-complexity} shows that \textsc{AFM-Elim} remains adaptive at the base-arm level even though it operates over a combinatorial family of candidate sets. An arm that appears only in clearly suboptimal candidate sets is discarded early, whereas an arm involved in difficult set comparisons or in benchmark estimation is sampled for longer. Moreover, each base-arm observation is shared across all active candidate sets containing that arm, so no separate sampling cost is incurred for individual sets. The identified representative set can further be used to recover the latent objective groups; a confidence-based assignment procedure and its sample complexity are given in Appendix~\ref{app:latent-group-recovery}.

\paragraph{Computationally Efficient Relaxation.}
Exact latent-set identification in \textsc{AFM-Elim} requires optimizing over the combinatorial family $\mathcal S_L=\{S\subseteq[K]:|S|=L\}$, which is computationally intractable in general. To obtain a polynomial-time alternative that preserves the worst-objective coverage criterion, we consider a bicriteria relaxation that allows $O(L\log M)$ representative arms. Specifically, with probability at least $1-\delta$, the relaxed procedure returns a set $\widehat S$ satisfying
\[
\Phi(\widehat S)\leq\Phi(S_L^\star)+\epsilon,
\qquad
|\widehat S|\leq\lceil H_ML\rceil,
\qquad
H_M=\sum_{j=1}^M\frac{1}{j}\leq1+\log M.
\]
The key to computational efficiency is that, at any fixed loss threshold, feasibility reduces to a set-cover problem over the objectives. Here, a loss threshold specifies the largest weighted degradation from the objective-wise ideal value that we are willing to tolerate on each objective, and an arm covers exactly those objectives whose weighted degradation does not exceed the threshold. Building on this observation, we propose \textsc{BiAFM-Greedy}, which first estimates the reward matrix by uniform sampling, then scans candidate thresholds in increasing order and applies the greedy set-cover rule at each one. It returns the first cover that satisfies the relaxed budget, thereby avoiding enumeration of all size-$L$ candidate sets. Compared with \textsc{AFM-Elim}, this procedure forgoes exact size-$L$ identification and instance-adaptive sampling, but in return offers a polynomial-time bicriteria guarantee: with high probability, its coverage loss is within $\epsilon$ of the optimal size-$L$ loss while using at most $O(L\log M)$ arms. The formal procedure and its theoretical guarantee are given in Appendix~\ref{app:computational-complexity}.

\section{Information-Theoretic Lower Bound}

The preceding algorithms come with explicit finite-time guarantees, but these guarantees do not characterize the fundamental difficulty. In this section, we establish an information-theoretic lower bound on the sample complexity of identifying the optimal representative set $S_L^\star$.

We consider Gaussian rewards, where each pull of arm $a\in[K]$ yields an independent observation $\bm r_t\sim\mathcal N(\boldsymbol\mu_a,\sigma^2\mathbf I_M)$. The Kullback--Leibler (KL) divergence between the reward distributions with mean vectors $\boldsymbol\mu_a$ and $\boldsymbol\nu_a$ is
$
d(\boldsymbol\mu_a,\boldsymbol\nu_a)
=
\frac{\|\boldsymbol\mu_a-\boldsymbol\nu_a\|_2^2}{2\sigma^2}.
$
Let $\boldsymbol\mu=(\boldsymbol\mu_1,\ldots,\boldsymbol\mu_K)$ denote a bandit instance, and let $S_L^\star(\boldsymbol\mu)$ denote its optimal representative set, which we assume to be unique. For a fixed instance $\boldsymbol\mu$, the set of alternative instances is
\begin{equation}
\operatorname{Alt}_L(\boldsymbol\mu)
=
\left\{
\boldsymbol\nu:
S_L^\star(\boldsymbol\nu)
\neq
S_L^\star(\boldsymbol\mu)
\right\}.
\end{equation}
Each $\boldsymbol\nu\in\operatorname{Alt}_L(\boldsymbol\mu)$ induces a different optimal representative set and must therefore be distinguished from $\boldsymbol\mu$ by any correct identification algorithm. The corresponding characteristic time is defined by
\begin{equation}\label{eq:latent-characteristic-time}
\left(T_L^\star(\boldsymbol\mu)\right)^{-1}
=
\sup_{\boldsymbol w\in\Delta_K}
\inf_{\boldsymbol\nu\in\operatorname{Alt}_L(\boldsymbol\mu)}
\sum_{a=1}^K
w_a
d(\boldsymbol\mu_a,\boldsymbol\nu_a).
\end{equation}
Here, the outer optimization determines the asymptotic sampling proportions, while the inner optimization selects the hardest alternative under the chosen allocation.


\begin{theorem}[Information-theoretic lower bound]
\label{thm:latent-lower-bound}
Let $\delta\in(0,1/2)$. Any fixed-confidence algorithm that identifies the optimal representative set with probability at least $1-\delta$ on every Gaussian instance with a unique optimal representative set satisfies
\begin{equation}\label{eq:latent-lower-bound}
\mathbb E_{\boldsymbol\mu}[\tau]
\geq
T_L^\star(\boldsymbol\mu)
\operatorname{KL}(1-\delta,\delta),
\end{equation}
where $\operatorname{KL}(\cdot,\cdot)$ denotes the binary relative entropy.
Consequently,
\begin{equation}\label{eq:latent-asymptotic-lower-bound}
\liminf_{\delta\downarrow0}
\frac{\mathbb E_{\boldsymbol\mu}[\tau]}
{\log(1/\delta)}
\geq
T_L^\star(\boldsymbol\mu).
\end{equation}
\end{theorem}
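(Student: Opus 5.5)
The plan is to use the standard change-of-measure argument of \citet{Garivier:2016}, specialized to the answer $S_L^\star$ and the alternative set $\operatorname{Alt}_L(\boldsymbol\mu)$. Fix a $\delta$-correct algorithm with stopping time $\tau$ and recommendation $\widehat S$. We may assume $\mathbb E_{\boldsymbol\mu}[\tau]<\infty$, since otherwise there is nothing to prove. Let $N_a(\tau)$ denote the number of pulls of arm $a$ up to $\tau$.

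The first step is the transportation inequality: for every $\boldsymbol\nu$ with a unique optimal set and every $\mathcal F_\tau$-measurable event $E$,
\[
\sum_{a=1}^K\mathbb E_{\boldsymbol\mu}[N_a(\tau)]\,d(\boldsymbol\mu_a,\boldsymbol\nu_a)\ \ge\ \operatorname{KL}\bigl(\mathbb P_{\boldsymbol\mu}(E),\mathbb P_{\boldsymbol\nu}(E)\bigr).
\]
This follows from Wald's identity applied to the log-likelihood ratio, which decomposes over arms because each pull of arm $a$ contributes an i.i.d.\ term with mean $d(\boldsymbol\mu_a,\boldsymbol\nu_a)=\|\boldsymbol\mu_a-\boldsymbol\nu_a\|_2^2/(2\sigma^2)$. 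We then combine it with the data-processing inequality for KL divergence. Since the observation is a full vector in $\mathbb R^M$ with isotropic Gaussian noise, the per-pull divergence is exactly the stated $d$.

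Next, apply this with $E=\{\widehat S=S_L^\star(\boldsymbol\mu)\}$ and any $\boldsymbol\nu\in\operatorname{Alt}_L(\boldsymbol\mu)$ that has a unique optimal set. By $\delta$-correctness, $\mathbb P_{\boldsymbol\mu}(E)\ge1-\delta$ and $\mathbb P_{\boldsymbol\nu}(E)\le\delta$. For $\delta<1/2$, the map $(x,y)\mapsto\operatorname{KL}(x,y)$ is increasing in $x$ and decreasing in $y$ on the region $x\ge 1/2\ge y$. Hence the right-hand side is at least $\operatorname{KL}(1-\delta,\delta)$.

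Now set $w_a=\mathbb E_{\boldsymbol\mu}[N_a(\tau)]/\mathbb E_{\boldsymbol\mu}[\tau]\in\Delta_K$ and take the infimum over $\boldsymbol\nu$. This gives
\[
\operatorname{KL}(1-\delta,\delta)\ \le\ \mathbb E_{\boldsymbol\mu}[\tau]\,\inf_{\boldsymbol\nu}\sum_a w_a d(\boldsymbol\mu_a,\boldsymbol\nu_a)\ \le\ \mathbb E_{\boldsymbol\mu}[\tau]\,\bigl(T_L^\star(\boldsymbol\mu)\bigr)^{-1},
\]
which is Eq.~(\ref{eq:latent-lower-bound}). The asymptotic statement Eq.~(\ref{eq:latent-asymptotic-lower-bound}) then follows from $\operatorname{KL}(1-\delta,\delta)\ge\log\bigl(1/(2.4\delta)\bigr)$, or more simply from $\operatorname{KL}(1-\delta,\delta)\sim\log(1/\delta)$ as $\delta\to0$.

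The main obstacle is a technical gap. The correctness guarantee only holds on instances whose optimal set is unique, whereas $\operatorname{Alt}_L(\boldsymbol\mu)$ as written may contain instances $\boldsymbol\nu$ with ties in $\Phi$. The infimum in Eq.~(\ref{eq:latent-characteristic-time}) must therefore be shown to be unchanged when it is restricted to alternatives with a unique optimum different from $S_L^\star(\boldsymbol\mu)$. I would handle this by density and continuity. The map $\boldsymbol\nu\mapsto\sum_a w_a d(\boldsymbol\mu_a,\boldsymbol\nu_a)$ is continuous. For any alternative with ties, take an optimal set $S'\ne S_L^\star(\boldsymbol\mu)$ and perturb $\boldsymbol\nu$ by an arbitrarily small amount that makes $S'$ the strict minimizer; for example, slightly raise the means of the arms of $S'$ on the objectives attaining the max in $\Phi(S')$. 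One has to check that this perturbation does not also lower $\Phi$ for other sets; if it does, a generic perturbation still works. Since $\Phi$ is piecewise-linear in $\boldsymbol\nu$, ties occur only on a finite union of lower-dimensional sets, so uniqueness holds on an open dense subset, while the optimal set differing from $S_L^\star(\boldsymbol\mu)$ is preserved along a small perturbation toward the $S'$-favoring direction. It follows that the restricted infimum equals the original one, which completes the argument.
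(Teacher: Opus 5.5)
Your first three steps (the transportation inequality applied to $E=\{\widehat S=S_L^\star(\boldsymbol\mu)\}$, monotonicity of the binary relative entropy, the normalization $w_a=\mathbb E_{\boldsymbol\mu}[N_a(\tau)]/\mathbb E_{\boldsymbol\mu}[\tau]$, and $\operatorname{KL}(1-\delta,\delta)\sim\log(1/\delta)$) are exactly the paper's proof. The problem is your final paragraph, which rests on a false claim. Non-uniqueness of the optimal set is \emph{not} confined to a lower-dimensional set. Two size-$L$ sets that share an arm which is the bottleneck of both on the same objective have identical $\Phi$ on a whole open set of instances.

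Concretely, take $K=4$, $L=2$, $M=3$, $\lambda_m=1$, and $\boldsymbol\mu_1=(1-D,0,1)$, $\boldsymbol\mu_2=(0,1,0)$, $\boldsymbol\mu_3=(0,1-D-g,0)$, $\boldsymbol\mu_4=(1,0,0)$, with $0<g<D\le 1/4$. Then $\Phi(\{1,2\})=D$, $\Phi(\{1,3\})=D+g$, and every other pair has loss $1$, so $S_L^\star=\{1,2\}$ is unique. Now take any $\boldsymbol\nu$ within sup-distance $D/4$ of $\boldsymbol\mu$:
\begin{itemize}
\item both $\{1,2\}$ and $\{1,3\}$ pay $\nu_{4,1}-\nu_{1,1}$ on objective $1$ through the shared arm $1$;
\item arm $2$ remains best on objective $2$;
\item all other pairs stay far worse.
\end{itemize}
Hence $\Phi_{\boldsymbol\nu}(\{1,3\})\ge\Phi_{\boldsymbol\nu}(\{1,2\})$, with equality on the open set where $\nu_{2,2}-\nu_{3,2}<\nu_{4,1}-\nu_{1,1}$. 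Your suggested perturbation (raising arm $1$ on objective $1$) lowers both losses equally, and no small perturbation breaks the tie there.

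So the tie-inclusive alternative set is reached at cost $g^2/\bigl(2\sigma^2(w_2^{-1}+w_3^{-1})\bigr)$. By contrast, no instance with a unique optimum other than $\{1,2\}$ lies in that ball, so the restricted infimum is at least $\min_a w_a D^2/(32\sigma^2)$ regardless of $g$. As $g\to0$, the two infima and the resulting characteristic times differ by an arbitrarily large factor. The equality you need therefore cannot be established by any density or continuity argument.

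The resolution is definitional rather than analytic. $\operatorname{Alt}_L(\boldsymbol\mu)$ must be read as the set of instances that \emph{have} a unique optimal set, and that set differs from $S_L^\star(\boldsymbol\mu)$. This is consistent with $S_L^\star(\cdot)$ being defined only for such instances. It is also what the paper tacitly uses when it asserts $\mathbb P_{\boldsymbol\nu}(E)\le\delta$ for every $\boldsymbol\nu\in\operatorname{Alt}_L(\boldsymbol\mu)$. Under that reading, your change-of-measure step applies to every alternative directly, and the proof is complete without the last paragraph.

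Under the tie-inclusive reading you adopted, neither your argument nor the paper's justifies the bound, because $\delta$-correctness gives no control of $\mathbb P_{\boldsymbol\nu}(E)$ on tied instances. The example above shows that this version of $T_L^\star$ can blow up even though no valid alternative comes close to $\boldsymbol\mu$.
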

Theorem~\ref{thm:latent-lower-bound} shows that any $\delta$-PAC algorithm requires, in expectation, at least order $\log(1/\delta)$ samples, with $T_L^\star(\boldsymbol\mu)$ quantifying the intrinsic difficulty of the instance. The max--min form in Eq.~(\ref{eq:latent-characteristic-time}) admits the standard fixed-confidence interpretation: the learner chooses asymptotic sampling proportions $\boldsymbol w$, and the inner optimization selects the hardest alternative instance that changes the optimal representative set. The resulting weighted KL divergence gives the maximal information rate, whose inverse is the leading constant of the asymptotic sample complexity. Unlike in objective-wise MO-BAI \citep{Chen:2025MOBAI}, the lower bound only requires distinguishing alternatives that change $S_L^\star$; objective-wise near-ties are therefore irrelevant unless they alter the optimal representative set. An oracle-based Track-and-Stop algorithm that asymptotically attains this bound with high probability is presented in Appendix~\ref{app:afm-tas}.

\section{Experiments}\label{sec:experiments}
In this section, we evaluate the proposed few-for-many identification framework from both statistical and practical perspectives. We first examine whether it improves sample efficiency by avoiding unnecessary objective-wise distinctions and by sharing vector-valued observations across objectives. We then assess its practical effectiveness on a real-world task.

\paragraph{Synthetic setup.}
All synthetic experiments use Gaussian rewards with $\sigma=1$ and 100 independent trials, with random seeds shared across methods. We report the mean number of arm pulls with $95\%$ confidence intervals, and set $\delta=0.05$ unless otherwise specified. We compare against three types of baselines: uniform-sampling variants of our algorithms; \textsc{Independent-GFM}, which solves each group separately with confidence level $\delta/Q$; and the two-stage \textsc{MOBAI+GFM/AFM}, which first identifies the objective-wise best arms using the method of \citet{Chen:2025MOBAI} and then performs few-for-many identification. The two stages split the confidence budget equally, and the second stage reuses the first-stage samples. For Figure~\ref{fig:irrelevant-gap}, the known-group instance has $Q=4$ groups with four objectives each, so $M=16$; the latent-group instance has $M=9$ objectives generated from three planted blocks of size three and uses the representative budget $L=3$, but the partition is not revealed to the algorithm. These instances are constructed so that objective-wise identification becomes harder as $\xi$ decreases, while the optimal representatives remain unchanged. Each group or planted block has a common representative with mean $0.8$, each objective has a specialist with mean $1$ and a near-specialist with mean $1-\xi$, and the remaining arms are dominated with mean $0.05$. Decreasing $\xi$ therefore isolates distinctions that are difficult for MO-BAI but irrelevant to few-for-many identification. To evaluate the benefit of sharing vector observations across groups, we further use a two-arm construction with $M=Q$ singleton groups and vary $Q\in\{1,2,4,8,16,32\}$.

\begin{figure*}[t]
  \centering
  \subfigure[Known groups]{\includegraphics[width=.485\textwidth]{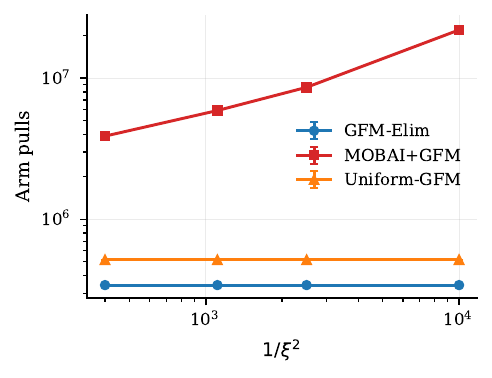}}
  \hfill
  \subfigure[Latent groups]{\includegraphics[width=.485\textwidth]{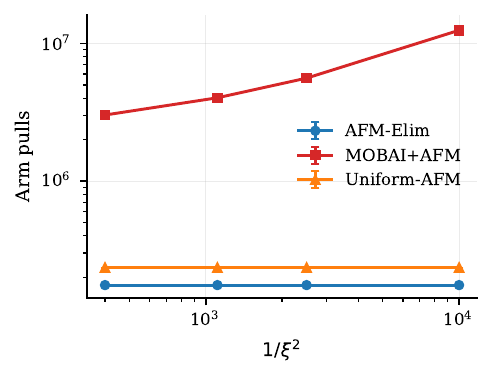}}
  \caption{Sample cost on synthetic instances with irrelevant objective-wise
  near-ties and dominated arms. Points show means over 100 trials and error bars indicate 95\% CIs; the horizontal axis is $1/\xi^2$.}
  \label{fig:irrelevant-gap}
\end{figure*}

\paragraph{Synthetic results.}
All methods identify the correct representatives in every run. In Figure~\ref{fig:irrelevant-gap}, the two-stage MO-BAI baselines require substantially more samples as $\xi$ decreases, because distinguishing each specialist from its near-specialist becomes increasingly difficult. In contrast, \textsc{GFM-Elim} and \textsc{AFM-Elim} are nearly insensitive to $\xi$, since these near-ties do not affect the optimal representatives. This illustrates the advantage of targeting the few-for-many solution directly and avoiding unnecessary objective-wise distinctions. Figure~\ref{fig:irrelevant-gap} also highlights the benefit of adaptive elimination: unlike the uniform variants, which keep sampling all arms, the proposed methods discard arms once they become irrelevant to benchmark or representative identification, and consistently require fewer samples. Figures~\ref{fig:sharing-adult}(a)--(b) further demonstrate the benefit of shared vector feedback. Because a single pull reveals all objective coordinates, \textsc{GFM-Elim} reuses the same observations across groups, whereas \textsc{Independent-GFM} solves each group separately. As a result, the relative gain grows approximately linearly in $Q$ and reaches about $32\times$ at $Q=32$, consistent with our analysis.

\begin{figure*}[t]
  \centering
  \subfigure[Sample cost]{\includegraphics[width=.31\textwidth]{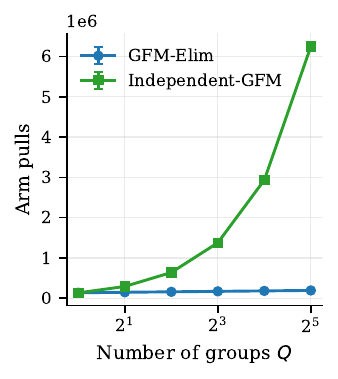}}
  \hfill
  \subfigure[Sharing gain]{\includegraphics[width=.31\textwidth]{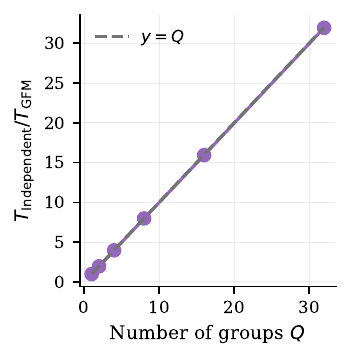}}
  \hfill
  \subfigure[Adult, $L=2$]{\includegraphics[width=.31\textwidth]{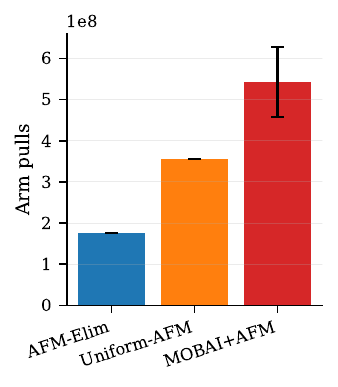}}
  \caption{(a)--(b) Sample cost and sharing gain on the shared-feedback
  synthetic instance; the dashed line in (b) marks the linear reference $y=Q$.
  (c) Sample cost on the Adult model-portfolio task; all methods identify the
  same optimal pair in every trial.}
  \label{fig:sharing-adult}
\end{figure*}

\paragraph{Real-world setup.}
We evaluate the proposed methods on the Adult dataset \citep{Kohavi:1996}. We construct 12 arms by combining three classifiers (logistic regression, random forest, and histogram gradient boosting) and four decision thresholds in $\{0.35,0.45,0.55,0.65\}$. The 15 objectives are test accuracies on demographic subgroups defined by sex, age, race, and sex--age combinations, after excluding two subgroups whose best arms are tied. Each pull returns subgroup-level correctness indicators, which are sub-Gaussian with $\sigma=1/2$. We set $L=2$, for which the optimal representative set consists of the two histogram-gradient-boosting models with thresholds $0.45$ and $0.55$. We run 100 independent trials and compare \textsc{AFM-Elim} with Uniform-AFM and \textsc{MOBAI+AFM}, using the same MO-BAI implementation as in the synthetic experiments.

\paragraph{Real-world results.}
As shown in Figure~\ref{fig:sharing-adult}(c), all methods identify the optimal two-model portfolio in every trial, but their sampling costs differ substantially. Uniform-AFM samples every model throughout the elimination process and thus cannot exploit the fact that several classifiers are quickly revealed to be irrelevant to the best size-two portfolio. \textsc{AFM-Elim} eliminates such arms and candidate sets earlier, reducing the cost by roughly a factor of two. The MO-BAI-based baseline is even more expensive, because its first stage attempts to identify the best model for every subgroup before solving the portfolio problem. On Adult, several subgroup accuracies are close across neighboring thresholds, so resolving every subgroup-wise winner requires many additional samples. The few-for-many objective, in contrast, only needs to determine which pair jointly covers the subgroups well, and can therefore stop without resolving all objective-wise winners. Consequently, \textsc{AFM-Elim} uses only about one third of the pulls of \textsc{MOBAI+AFM}. Overall, the real-world experiment supports the same conclusion as the synthetic study: directly targeting the deployable representative set can be much cheaper than solving the stronger objective-wise identification problem.

Appendix~\ref{app:additional-experiments} provides further empirical results, including confidence calibration, objective scaling, representative-gap sensitivity, and a computational study of the bicriteria approximation algorithm.

\section{Conclusion and Future Work}
We formulated few-for-many best-arm identification as a fixed-confidence problem whose goal is to find a small set of arms such that every objective is well served by at least one selected arm. We considered two settings: known objective groups, where one representative is identified for each group, and latent objective groups, where a budgeted representative set is selected without a predefined partition. For these settings, we developed \textsc{GFM-Elim} and \textsc{AFM-Elim}, which exploit vector-valued feedback and eliminate arms as soon as they become irrelevant to the final decision. We established instance-dependent sample-complexity guarantees for both algorithms and an information-theoretic lower bound for the latent-group setting, and further showed that this lower bound is asymptotically attainable with high probability by an oracle-based Track-and-Stop procedure. Our experiments demonstrate that directly targeting the few-for-many solution can substantially reduce the sample complexity by sharing observations and avoiding unnecessary objective-wise distinctions.

An important direction for future work is to better reconcile statistical efficiency with computational scalability in the latent-group setting, where exact identification involves combinatorial optimization. It would also be of interest to extend the framework beyond the current setting, for example, to adaptive selection of the representative budget and to partial feedback.



\newpage
\appendix

\begin{center}
\LARGE \textbf{Appendix}
\end{center}

\vspace{0.5cm}

\section{Extended Related Work}\label{app:extended-related-work}
In this section, we review three lines of research most relevant to our work: fixed-confidence best-arm identification, multi-objective bandits, and few-for-many optimization.

\paragraph{Fixed-confidence best-arm identification.}
Fixed-confidence best-arm identification (BAI) seeks to identify the optimal arm with a prescribed confidence level while minimizing the number of samples \citep{Audibert:2010,Gabillon:2012}. A major line of work studies its instance-dependent sample complexity and develops sampling strategies that attain the corresponding information-theoretic limits. In the classical setting, \citet{Garivier:2016} established an asymptotically tight lower bound and proposed Track-and-Stop, which adaptively tracks the optimal sampling proportions. This framework has since been extended to more structured pure-exploration problems, including combinatorial identification \citep{Chen:2014CPE,Chen:2016Matroid,Jourdan:2021}, identification with multiple correct answers \citep{Degenne:2019}, and linear and transductive bandits \citep{Fiez:2019,Degenne:2020,Jedra:2020}. These studies show that the structure of the answer space and of the feedback model plays a central role in determining both the optimal sampling allocation and the resulting sample complexity.

\paragraph{Multi-objective bandits.}
Multi-objective bandits extend classical bandit models to vector-valued rewards and have been studied extensively in both online decision making and pure exploration \citep{Drugan:2013}. In the online setting, one line of work targets Pareto-efficient decisions without imposing an explicit scalar preference, including contextual Pareto regret \citep{Turgay:2018}, Pareto regret under stochastic and adversarial feedback \citep{Xu:2023}, and Pareto-regret minimization in linear contextual bandits \citep{Park:2025}. Another line incorporates user preferences through scalarizations or structured orderings. Representative examples include nonlinear scalarizations for exploring the Pareto front with hypervolume guarantees \citep{Zhang:2024}, as well as lexicographic and Pareto--lexicographic bandits that optimize objectives according to prescribed priority structures \citep{Xue:2024,Cheng:2024}. Recent extensions study cost-aware multi-objective evaluation \citep{xue2026costaware}, lexicographic generalized low-rank matrix bandits \citep{xue2026efficientmatrix}, safe multi-objective linear bandits under hierarchical preferences \citep{xue2026safe}, and multi-objective reinforcement learning with bandit feedback \citep{qiu2024traversing}.

In pure exploration, most existing work focuses on identifying the Pareto-optimal set under fixed-budget or fixed-confidence criteria \citep{Auer:2016,Ararat:2023,Cyrille:2023,Crepon:2024}. More recent studies incorporate preference information through ordering cones, which induce partial orders over vector-valued rewards and encode structured trade-offs among objectives \citep{Shukla:2024PrePEx,Das:2025FraPPE}. Most closely related to our work, \citet{Chen:2025MOBAI} study fixed-confidence multi-objective best-arm identification, in which the learner identifies the best arm for each objective, and characterize asymptotically optimal sampling. In contrast, few-for-many identification seeks only a small set of representative arms that collectively serve all objectives. Consequently, it need not recover the entire Pareto set or resolve every objective-wise best arm, but only those distinctions that can affect the optimal representatives.

\paragraph{Few-for-many optimization.}
The few-for-many paradigm has emerged as an alternative to approximating a large Pareto set when only a small number of solutions can ultimately be deployed. \citet{Liu:2024MaCP} formulate the many-objective cover problem (MaCP), which selects $k\ll M$ solutions such that each objective is well served by at least one of them, and propose a clustering-based method for the resulting NP-hard problem. Building on this idea, \citet{Lin:2025few} introduce Tchebycheff set scalarization, which evaluates a set through its worst objective-wise performance, together with a smooth variant amenable to gradient-based optimization. \citet{Liu:2025F4M} further develop a conditional set-optimization framework that supports different scalarizations and both continuous and discrete many-objective problems. Subsequent studies improve the computational efficiency of few-for-many optimization \citep{Shang:2026F4M} and extend the paradigm to personalized federated learning, where a small collection of shared models is optimized to serve heterogeneous clients \citep{Guo:2026FedFew}.

These works primarily address deterministic optimization, where objective evaluations or optimization feedback are directly available. In contrast, we study few-for-many identification under statistical uncertainty, where the vector-valued means of the candidate arms are unknown and must be inferred from noisy sequential observations.

\section{Polynomial-Time Bicriteria Approximate Identification}
\label{app:computational-complexity}

\begin{algorithm}[tb]
\caption{\textsc{GreedyCover}}
\label{alg:greedy-cover}
\begin{algorithmic}[1]
\Require Coverage sets $\{C_a\}_{a\in[K]}$ and universe $[M]$
\State $S\gets\emptyset$, $\mathcal U\gets[M]$, and $\mathcal A\gets[K]$
\While{$\mathcal U\neq\emptyset$}
    \State Choose $a^\prime\in\arg\max_{a\in\mathcal A}|C_a\cap\mathcal U|$
    \If{$|C_{a^\prime}\cap\mathcal U|=0$}
        \State \Return \textsc{Fail}
    \EndIf
    \State $S\gets S\cup\{a^\prime\}$, $\mathcal U\gets\mathcal U\setminus C_{a^\prime}$, and $\mathcal A\gets\mathcal A\setminus\{a^\prime\}$
\EndWhile
\State \Return $S$
\end{algorithmic}
\end{algorithm}

\begin{algorithm}[tb]
\caption{\textsc{BiAFM-Greedy}: Bicriteria Approximate Few-for-Many Identification}
\label{alg:biafm-greedy}
\begin{algorithmic}[1]
\Require Accuracy $\epsilon>0$, confidence $\delta\in(0,1)$, budget $L$, and weights $\{\lambda_m\}_{m\in[M]}$
\State $\overline\lambda\gets\max_{m\in[M]}\lambda_m$ and $n\gets\left\lceil\frac{32\sigma^2\overline\lambda^2}{\epsilon^2}\log\left(\frac{2KM}{\delta}\right)\right\rceil$
\State Pull every arm $n$ times and compute $\widehat\mu_{a,m}$ for all $a,m$
\State $\widehat\mu_m^\star\gets\max_{a\in[K]}\widehat\mu_{a,m}$ for every $m\in[M]$
\State $\widehat d_{a,m}\gets\lambda_m(\widehat\mu_m^\star-\widehat\mu_{a,m})$ for all $a,m$
\State $\widehat{\mathcal T}\gets\{\widehat d_{a,m}:a\in[K],m\in[M]\}$, sorted increasingly after removing duplicates
\State $B_L\gets\min\{K,\lceil H_ML\rceil\}$
\For{each $\gamma\in\widehat{\mathcal T}$ in increasing order}
    \State $\widehat C_a(\gamma)\gets\{m\in[M]:\widehat d_{a,m}\leq\gamma\}$ for every $a\in[K]$
    \State $S_\gamma\gets\textsc{GreedyCover}(\{\widehat C_a(\gamma)\}_{a\in[K]},[M])$
    \If{$S_\gamma\neq\textsc{Fail}$ and $|S_\gamma|\leq B_L$}
        \State \Return $\widehat S\gets S_\gamma$
    \EndIf
\EndFor
\end{algorithmic}
\end{algorithm}

The exact \textsc{AFM-Elim} algorithm identifies $S_L^\star$ with an instance-dependent sample complexity, but it maintains a combinatorial family of candidate sets. This section presents \textsc{BiAFM-Greedy}, a complementary polynomial-time procedure. The price of computational tractability is a bicriteria relaxation: the returned set may contain $O(L\log M)$ arms, while its coverage loss remains within $\epsilon$ of the optimal size-$L$ loss.

The key observation is that, for a fixed target loss, the latent few-for-many problem reduces to a set-cover problem over the objectives. For each arm $a\in[K]$ and objective $m\in[M]$, define the weighted deficit $d_{a,m}=\lambda_m(\mu_m^\star-\mu_{a,m})$. The latent-set loss can then be written as
\begin{equation*}
\Phi(S)
=
\max_{m\in[M]}
\min_{a\in S}
d_{a,m}.
\end{equation*}
For any threshold $\gamma\geq0$, define the set of objectives covered by arm $a$ as
\begin{equation*}
C_a(\gamma)
=
\{m\in[M]:d_{a,m}\leq\gamma\}.
\end{equation*}
It follows that
\begin{equation*}
\Phi(S)\leq\gamma
\quad\Longleftrightarrow\quad
\bigcup_{a\in S}C_a(\gamma)=[M].
\end{equation*}
Hence, for a fixed threshold $\gamma$, finding a minimum-cardinality representative set with loss at most $\gamma$ is equivalent to finding a minimum set cover of the objectives.

We therefore use the standard greedy set-cover procedure in Algorithm~\ref{alg:greedy-cover}. Starting with all objectives uncovered, it repeatedly selects the arm that covers the largest number of currently uncovered objectives. Whenever a feasible cover exists, the classical greedy guarantee gives
\begin{equation*}
|S_{\rm greedy}|
\leq
H_M\,\operatorname{OPT}_{\rm SC},
\qquad
H_M=\sum_{j=1}^M\frac{1}{j}\leq1+\log M,
\end{equation*}
where $\operatorname{OPT}_{\rm SC}$ denotes the minimum cover size.

The complete \textsc{BiAFM-Greedy} procedure is summarized in Algorithm~\ref{alg:biafm-greedy}. It first samples all arms uniformly to estimate the unknown means, and then computes the corresponding empirical ideal values and weighted deficits. Instead of enumerating the candidate family $\mathcal S_L$, the algorithm uses the distinct empirical deficits as candidate thresholds. For each threshold, it constructs the induced empirical coverage sets and applies \textsc{GreedyCover}; the first threshold whose greedy cover satisfies the relaxed budget $B_L=\min\{K,\lceil H_ML\rceil\}$ determines the returned set. Algorithm~\ref{alg:biafm-greedy} thus examines at most $KM$ thresholds and never enumerates the $\binom{K}{L}$ candidate sets.

\begin{theorem}[Polynomial-time bicriteria identification]\label{thm:biafm-greedy}
Suppose that the reward noise is $\sigma^2$-sub-Gaussian, and let $\overline\lambda=\max_{m\in[M]}\lambda_m$. For any $\epsilon>0$ and $\delta\in(0,1)$, with probability at least $1-\delta$, Algorithm~\ref{alg:biafm-greedy} returns a set $\widehat S$ such that
\begin{equation*}
|\widehat S|\leq\min\{K,\lceil H_ML\rceil\},
\qquad
\Phi(\widehat S)\leq\Phi(S_L^\star)+\epsilon,
\end{equation*}
where $H_M=\sum_{j=1}^M1/j\leq1+\log M$. The total number of arm pulls satisfies
\begin{equation}\label{eq:biafm-sample-complexity}
T_{\rm BiAFM}
\leq
K\left\lceil\frac{32\sigma^2\overline\lambda^2}{\epsilon^2}\log\left(\frac{2KM}{\delta}\right)\right\rceil.
\end{equation}
Moreover, the computational complexity of the algorithm is polynomial in $K$, $M$, $L$, $1/\epsilon$, and $\log(1/\delta)$.
\end{theorem}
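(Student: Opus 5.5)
The plan has three parts: a uniform concentration event, a proof that the greedy scan stops early enough on that event, and an approximation argument that transfers empirical coverage back to true coverage. The sample count and the running time then follow directly.

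\textbf{Concentration.} Each arm is pulled $n$ times. Hoeffding's inequality for sub-Gaussian noise, together with a union bound over the $KM$ arm--objective pairs, gives the event
\[
\mathcal E=\bigl\{|\widehat\mu_{a,m}-\mu_{a,m}|\le\eta\ \ \forall a,m\bigr\},
\qquad \eta=\frac{\epsilon}{4\overline\lambda}.
\]
The choice $n=\lceil 32\sigma^2\overline\lambda^2\epsilon^{-2}\log(2KM/\delta)\rceil$ makes $\mathbb P(\mathcal E)\ge 1-\delta$. On $\mathcal E$ we have $|\widehat\mu_m^\star-\mu_m^\star|\le\eta$, because a maximum of $\eta$-accurate estimates is itself $\eta$-accurate. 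Hence
\[
|\widehat d_{a,m}-d_{a,m}|\le 2\lambda_m\eta\le \frac{\epsilon}{2}.
\]
The pull count in Eq.~(\ref{eq:biafm-sample-complexity}) is simply $Kn$.

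\textbf{Termination at a good threshold.} Let $\gamma^\star=\Phi(S_L^\star)$. Every objective $m$ has some $a\in S_L^\star$ with $d_{a,m}\le\gamma^\star$. I will use the candidate threshold
\[
\widehat\gamma_0=\max_m\min_{a\in S_L^\star}\widehat d_{a,m},
\]
which is an element of $\widehat{\mathcal T}$ and satisfies $\widehat\gamma_0\le\gamma^\star+\epsilon/2$.

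At threshold $\widehat\gamma_0$, the set $S_L^\star$ covers $[M]$ under the empirical coverage sets $\widehat C_a(\widehat\gamma_0)$. So a cover of size $L$ exists, and \textsc{GreedyCover} succeeds with at most $H_M L$ arms. Since the output size is an integer and greedy never picks an arm twice, the size is at most $\min\{K,\lceil H_ML\rceil\}=B_L$. The loop therefore returns at some $\gamma\le\widehat\gamma_0$ and never runs off the end of $\widehat{\mathcal T}$.

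\textbf{Quality of the returned set.} The returned $\widehat S$ covers $[M]$ at its stopping threshold $\gamma$. This means every $m$ has some $a\in\widehat S$ with $\widehat d_{a,m}\le\gamma$, and therefore
\[
d_{a,m}\le\gamma+\frac{\epsilon}{2}\le\gamma^\star+\epsilon.
\]
Consequently $\Phi(\widehat S)\le\Phi(S_L^\star)+\epsilon$.

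\textbf{Computation.} There are at most $KM$ thresholds. At each one, building the coverage sets costs $O(KM)$ and greedy cover costs $O(K^2M)$. The estimation step costs $O(KMn)$, which is polynomial in $1/\epsilon$ and $\log(1/\delta)$. Summing these gives a polynomial total running time.

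\textbf{Main obstacle.} The delicate point is the termination argument. One must check that the threshold induced by $S_L^\star$ actually appears in the candidate list $\widehat{\mathcal T}$. One must also account for the ceiling and the $\min$ with $K$ in $B_L$. Both are handled by choosing the empirical threshold $\widehat\gamma_0$ defined above, rather than the true $\gamma^\star$, which need not lie in $\widehat{\mathcal T}$.
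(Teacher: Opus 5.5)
Your proposal is correct and follows the paper's proof essentially step for step. You use the same concentration event with $\eta=\epsilon/(4\overline\lambda)$, the same empirical threshold $\widehat\Phi(S_L^\star)$ (your $\widehat\gamma_0$), shown to lie in $\widehat{\mathcal T}$ and to pass the acceptance test, and the same chain $\Phi(\widehat S)\le\widehat\gamma+\epsilon/2\le\widehat\Phi(S_L^\star)+\epsilon/2\le\Phi(S_L^\star)+\epsilon$; the only cosmetic difference is that you argue objective by objective instead of through the uniform bound $|\widehat\Phi(S)-\Phi(S)|\le\epsilon/2$, which rests on the same nonexpansiveness of $\min$ and $\max$.
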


Theorem~\ref{thm:biafm-greedy} provides a finite-time bicriteria guarantee for the polynomial-time relaxation. Ignoring integer rounding, it is an $(H_M,\epsilon,\delta)$-PAC result: with probability at least $1-\delta$, the algorithm returns a set whose loss is within $\epsilon$ of that of the best size-$L$ representative set, at the cost of a multiplicative relaxation of at most $H_M\leq1+\log M$ in set size. Thus, \textsc{BiAFM-Greedy} complements \textsc{AFM-Elim} by trading exact cardinality for polynomial-time computation.

\section{Oracle-Based Track-and-Stop}\label{app:afm-tas}

This section presents \textsc{AFM-TaS}, an oracle-based, asymptotically optimal counterpart of \textsc{AFM-Elim}. Rather than providing a practical finite-time procedure, its purpose is to show that, under Gaussian rewards and given the required optimization oracles, the information-theoretic lower bound in Theorem~\ref{thm:latent-lower-bound} is asymptotically attainable with high probability. Following the Track-and-Stop framework \citep{Garivier:2016,Chen:2025MOBAI}, \textsc{AFM-TaS} repeatedly estimates the optimal representative set, tracks the corresponding optimal sampling allocation, and stops once sufficient evidence separates the current answer from all alternatives.

The optimal sampling proportions are characterized by the max--min problem in Eq.~(\ref{eq:latent-characteristic-time}). For an instance $\boldsymbol\mu$, define
\begin{equation*}
\mathcal W_L^\star(\boldsymbol\mu)
=
\arg\max_{\boldsymbol w\in\Delta_K}
\inf_{\boldsymbol\nu\in\operatorname{Alt}_L(\boldsymbol\mu)}
\sum_{a=1}^K
w_a
d(\boldsymbol\mu_a,\boldsymbol\nu_a).
\end{equation*}
At round $t$, let $\widehat{\boldsymbol\mu}(t)$ denote the empirical mean instance. The current empirical answer is
\begin{equation}\label{eq:empirical-representative-set}
\widehat S_t
\in
\arg\min_{S\in\mathcal S_L}
\Phi_{\widehat{\boldsymbol\mu}(t)}(S),
\end{equation}
and the statistical evidence in its favor is quantified by the generalized likelihood-ratio (GLR) statistic
\begin{equation}\label{eq:latent-glr-statistic}
Z_L(t)
=
\inf_{\boldsymbol\nu:\,
S_L^\star(\boldsymbol\nu)\neq\widehat S_t}
\sum_{a=1}^K
N_a(t)
d\left(
\widehat{\boldsymbol\mu}_a(t),
\boldsymbol\nu_a
\right).
\end{equation}
Thus, $Z_L(t)$ measures the minimum accumulated evidence needed to move from the empirical instance to an alternative under which $\widehat S_t$ is no longer optimal.

We instantiate the stopping threshold as
\begin{equation*}
\beta(t,\delta)
=
\log\left(\frac{\pi^2t^2|\mathcal S_L|}{6\delta}\right),
\end{equation*}
where the factor $|\mathcal S_L|$ accounts for the finite answer space. By the standard GLR maximal inequality and a union bound over the candidate answers and over time (using $\sum_{t\geq1}6/(\pi^2t^2)=1$), this threshold satisfies
\begin{equation}\label{eq:valid-glr-threshold}
\mathbb P_{\boldsymbol\mu}
\left(
\exists t\geq1:
\widehat S_t\neq S_L^\star(\boldsymbol\mu),
\;
Z_L(t)\geq\beta(t,\delta)
\right)
\leq\delta,
\end{equation}
and, for fixed problem dimensions,
\begin{equation}\label{eq:threshold-asymptotics}
\beta(t,\delta)
=
\log\frac{1}{\delta}
+
o\left(\log\frac{1}{\delta}\right)
\end{equation}
uniformly over $t=O(\log(1/\delta))$.

\textsc{AFM-TaS} first samples every arm once. At each subsequent round, it stops and returns $\widehat S_t$ if $Z_L(t)\geq\beta(t,\delta)$. Otherwise, it computes an allocation in $\mathcal W_L^\star(\widehat{\boldsymbol\mu}(t))$ and tracks it while enforcing sufficient exploration of every arm. This tracking rule drives the empirical sampling proportions toward the information-theoretically optimal allocation while ensuring consistent estimation of the underlying instance.

\begin{algorithm}[tb]
\caption{\textsc{AFM-TaS}: Oracle-Based Adaptive Few-for-Many Track-and-Stop}
\label{alg:afm-tas}
\begin{algorithmic}[1]
\Require Confidence level $\delta$ and representative budget $L$
\State Pull every arm once
\For{$t=K,K+1,\ldots$}
    \State Compute $\widehat S_t$ and $Z_L(t)$ using
    Eqs.~(\ref{eq:empirical-representative-set})--(\ref{eq:latent-glr-statistic})
    \If{$Z_L(t)\geq\beta(t,\delta)$}
        \State \Return $\widehat S_t$
    \EndIf
    \State Compute
    $
    \boldsymbol w_t
    \in
    \mathcal W_L^\star
    \left(
    \widehat{\boldsymbol\mu}(t)
    \right)
    $
    \If{$\min_{a\in[K]}N_a(t)<\sqrt{t}-K/2$}
        \State Pull an arm with the smallest number of observations
    \Else
        \State Pull
        $
        A_{t+1}
        \in
        \arg\min_{a\in[K]}
        \left[
        N_a(t)
        -
        \sum_{s=K}^{t}w_{s,a}
        \right]
        $
    \EndIf
\EndFor
\end{algorithmic}
\end{algorithm}

\begin{theorem}[Oracle-based asymptotic upper bound]\label{thm:afm-asymptotic-optimality}
Suppose that the rewards are Gaussian and $T_L^\star(\boldsymbol\mu)<\infty$, and that the allocation oracle in Step 7 is exact. Then Algorithm~\ref{alg:afm-tas} is $\delta$-PAC and, for every $\eta>0$, satisfies
\begin{equation}\label{eq:afm-asymptotic-upper}
\lim_{\delta\downarrow0}
\mathbb P_{\boldsymbol\mu}
\left(
\tau_\delta
\leq
(1+\eta)T_L^\star(\boldsymbol\mu)
\log\frac{1}{\delta}
\right)
=1.
\end{equation}
Thus, Algorithm~\ref{alg:afm-tas} attains the characteristic time
$T_L^\star(\boldsymbol\mu)$ with asymptotically high probability.
\end{theorem}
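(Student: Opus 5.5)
The proof follows the Track-and-Stop template of \citet{Garivier:2016}, with the answer space $\mathcal S_L$ in place of the arm index. There are three parts: $\delta$-correctness, almost-sure concentration of the empirical instance and of the sampling proportions, and a stopping-time bound on a good event whose probability tends to one.

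\textbf{Correctness.} This follows directly from Eq.~(\ref{eq:valid-glr-threshold}). The algorithm returns a wrong set only on the event that some $t$ has $\widehat S_t\neq S_L^\star(\boldsymbol\mu)$ and $Z_L(t)\geq\beta(t,\delta)$, and that event has probability at most $\delta$. To justify Eq.~(\ref{eq:valid-glr-threshold}) itself, note that on such an event $\boldsymbol\mu$ lies in $\{\boldsymbol\nu:S_L^\star(\boldsymbol\nu)\neq\widehat S_t\}$. Hence $Z_L(t)\leq\sum_a N_a(t)\,d(\widehat{\boldsymbol\mu}_a(t),\boldsymbol\mu_a)$. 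For Gaussians this is a sum of self-normalized $\chi^2$-type terms, and we control it with a mixture/GLR deviation inequality plus a union bound over $|\mathcal S_L|$ and $t$.

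\textbf{Concentration.} The forced-exploration rule gives $N_a(t)\geq\sqrt t-K/2$ for every $a$, so $\widehat{\boldsymbol\mu}(t)\to\boldsymbol\mu$ almost surely. Because $S_L^\star(\boldsymbol\mu)$ is unique, $\Delta_{L,\min}>0$. The map $\boldsymbol\nu\mapsto\Phi_{\boldsymbol\nu}(S)$ is Lipschitz (a max/min of linear functions), so there is a neighbourhood $\mathcal N$ of $\boldsymbol\mu$ on which $S_L^\star(\boldsymbol\nu)=S_L^\star(\boldsymbol\mu)$. Eventually $\widehat S_t=S_L^\star$, and from then on $\operatorname{Alt}_L(\widehat{\boldsymbol\mu}(t))$ is the fixed set $\{\boldsymbol\nu:S_L^\star(\boldsymbol\nu)\neq S_L^\star(\boldsymbol\mu)\}$.

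Next we need continuity of $F(\boldsymbol w,\boldsymbol\lambda)=\inf_{\boldsymbol\nu\in\operatorname{Alt}_L(\boldsymbol\lambda)}\sum_a w_a d(\boldsymbol\lambda_a,\boldsymbol\nu_a)$ on $\Delta_K\times\mathcal N$. Since the alternative set is constant on $\mathcal N$ and $d$ is a quadratic form, $F$ is an infimum of functions that are jointly continuous, locally uniformly; it is therefore upper semicontinuous. Lower semicontinuity holds because the infimum can be restricted to a compact set. Berge's maximum theorem then makes $\boldsymbol\lambda\mapsto\mathcal W_L^\star(\boldsymbol\lambda)$ upper hemicontinuous with convex values. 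Convexity holds because $F(\cdot,\boldsymbol\lambda)$ is concave, being an infimum of linear functions. With these properties, the C-tracking argument of \citet{Garivier:2016} (more precisely the version for set-valued oracles, as in Degenne--Koolen) gives $\operatorname{dist}(N(t)/t,\mathcal W_L^\star(\boldsymbol\mu))\to0$. I expect this to be the main obstacle, since $\mathcal W_L^\star$ need not be a singleton. The convexity of the optimal set together with the averaging in the tracking rule is what I would use to handle it.

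\textbf{Stopping time.} Fix $\eta>0$ and choose $\varepsilon$ small. Let $\mathcal E_T$ be the event that for all $t\geq\sqrt T$ we have $\widehat{\boldsymbol\mu}(t)\in\mathcal N_\varepsilon$ and $\operatorname{dist}(N(t)/t,\mathcal W_L^\star)\leq\varepsilon$. Unlike the expectation bound, we only need $\mathbb P(\mathcal E_T)\to1$ as $T\to\infty$, and this follows from almost-sure convergence.

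On $\mathcal E_T$, for $t\geq\sqrt T$, we have $Z_L(t)=t\,F(N(t)/t,\widehat{\boldsymbol\mu}(t))\geq t\,(T_L^\star(\boldsymbol\mu)^{-1}-\kappa(\varepsilon))$ with $\kappa(\varepsilon)\to0$, by continuity of $F$. The algorithm therefore stops as soon as $t(T_L^\star{}^{-1}-\kappa)\geq\beta(t,\delta)$. Now take $T=(1+\eta)T_L^\star\log(1/\delta)$. By Eq.~(\ref{eq:threshold-asymptotics}), $\beta(T,\delta)=(1+o(1))\log(1/\delta)$, so this inequality holds at $t=T$ for $\varepsilon$ small and $\delta$ small. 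Hence $\mathbb P(\tau_\delta\leq T)\geq\mathbb P(\mathcal E_T)\to1$ as $\delta\downarrow0$, which is Eq.~(\ref{eq:afm-asymptotic-upper}).
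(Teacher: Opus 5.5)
Your proposal is correct and follows the paper's proof essentially step for step. The shared steps are: $\delta$-correctness directly from Eq.~(\ref{eq:valid-glr-threshold}); almost-sure convergence of $\widehat{\boldsymbol\mu}(t)$ with eventual stabilization $\widehat S_t=S_L^\star$; convergence of $\mathbf N(t)/t$ to $\mathcal W_L^\star(\boldsymbol\mu)$, so that $Z_L(t)/t$ is eventually close to $1/T_L^\star(\boldsymbol\mu)$; and a deterministic time of order $(1+\eta)T_L^\star(\boldsymbol\mu)\log(1/\delta)$ at which Eq.~(\ref{eq:threshold-asymptotics}) forces stopping with probability tending to one. Your event $\mathcal E_T$ simply replaces the paper's almost surely finite time $t_\eta$, and your Berge/convexity/C-tracking discussion for a possibly non-singleton $\mathcal W_L^\star$ makes explicit what the paper calls the standard tracking and continuity arguments. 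The one loose point, which mirrors the paper's own remark, is your aside justifying Eq.~(\ref{eq:valid-glr-threshold}): bounding $Z_L(t)$ by the $KM$-dimensional statistic $\sum_a N_a(t)d(\widehat{\boldsymbol\mu}_a(t),\boldsymbol\mu_a)$ yields a threshold with extra dimension-dependent terms rather than exactly $\beta(t,\delta)$, but any such threshold still satisfies Eq.~(\ref{eq:threshold-asymptotics}) and leaves the asymptotic argument intact.
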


Theorem~\ref{thm:afm-asymptotic-optimality} shows that the oracle allocation is not merely a heuristic for focusing samples: with asymptotically high probability, the stopping time of \textsc{AFM-TaS} matches the characteristic time $T_L^\star(\boldsymbol\mu)$ appearing in the lower bound of Theorem~\ref{thm:latent-lower-bound}. By contrast, \textsc{AFM-Elim} trades this asymptotic optimality for a simpler elimination rule and an explicit non-asymptotic guarantee. The two algorithms therefore play different roles: \textsc{AFM-Elim} is the implementable finite-time procedure, whereas \textsc{AFM-TaS} certifies the information-theoretic benchmark.

\section{Additional Experiments}\label{app:additional-experiments}

This appendix provides additional experiments that complement the main results in Section~\ref{sec:experiments}. We first examine confidence calibration, scaling with the number of objectives, and sensitivity to the gaps that directly determine the few-for-many answer. We then study the computational behavior of the polynomial-time bicriteria algorithm in Appendix~\ref{app:computational-complexity}. We do not include \textsc{AFM-TaS}, since it is an oracle-based construction that requires solving the characteristic-time allocation problem exactly and repeatedly computing GLR stopping statistics against all alternative representative sets.

\subsection{Confidence Calibration}

We first examine whether the empirical error rate is consistent with the prescribed confidence level. We consider one known-group instance and one latent-group instance from the synthetic setting in Section~\ref{sec:experiments}, fix $\xi=0.1$, and vary $\delta\in\{0.2,0.1,0.05,0.02\}$. For each $\delta$, we run 200 independent trials, using \textsc{GFM-Elim} on the known-group instance and \textsc{AFM-Elim} on the latent-group instance.

\begin{figure}[H]
  \centering
  \includegraphics[width=.86\textwidth]{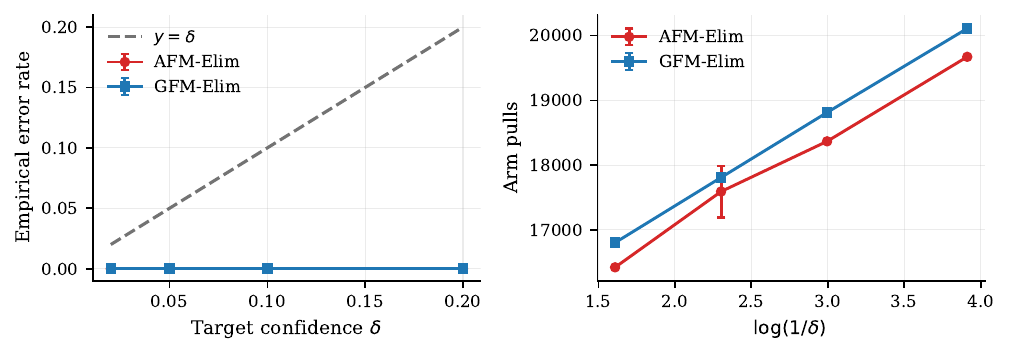}
  \caption{Confidence calibration. The left panel compares the empirical error rate with the target confidence level, and the right panel reports the
  corresponding sample complexity.}
  \label{fig:appendix-confidence-calibration}
\end{figure}

As shown in Figure~\ref{fig:appendix-confidence-calibration}, both algorithms return the correct answer in all trials at every tested confidence level, which is consistent with the conservative finite-time confidence allocation used by the elimination procedures. The sample complexity increases only mildly as $\delta$ decreases, in line with the logarithmic dependence on $1/\delta$ in the theoretical bounds. Hence, the algorithms remain reliable across confidence levels, and the sample budget grows gradually as the confidence requirement becomes stricter.

\subsection{Scaling with the Number of Objectives}

We next isolate the many-objective effect in the known-group setting. We fix $Q=4$ groups and increase the number of objectives per group from $2$ to $16$, so that $M$ ranges from $8$ to $64$. The optimal representative of each group remains unchanged, while every objective has its own specialist and near-specialist. Consequently, methods that first identify all objective-wise best arms must resolve a growing number of distinctions that are unnecessary for the few-for-many answer.

\begin{figure}[H]
  \centering
  \includegraphics[width=.52\textwidth]{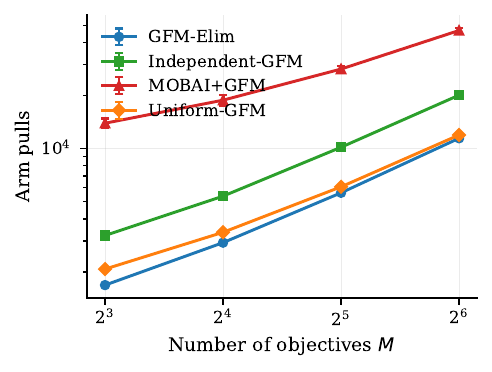}
  \caption{Scaling with the number of objectives in the known-group setting.}
  \label{fig:appendix-objective-scaling}
\end{figure}

As shown in Figure~\ref{fig:appendix-objective-scaling}, \textsc{GFM-Elim} consistently identifies the group representatives using substantially fewer pulls than the two-stage \textsc{MOBAI+GFM} baseline. The gap widens as $M$ grows, because \textsc{MOBAI+GFM} spends many samples resolving objective-wise specialists before solving the grouped problem. \textsc{GFM-Elim} also improves over \textsc{Independent-GFM} by sharing each vector observation across group decisions. Its curve approaches that of Uniform-GFM for larger $M$, because this experiment adds a specialist and a near-specialist arm for each new objective while keeping the number of easily discarded dominated arms fixed; hence, a growing fraction of arms remains relevant to benchmark estimation. These results confirm the main message of the synthetic study: as the number of objectives grows, directly identifying representatives is more sample-efficient than recovering every objective-wise winner.

\subsection{Sensitivity to Representative Gaps}

The main experiment in Figure~\ref{fig:irrelevant-gap} varies an objective-wise gap that is irrelevant to the answer. Here, we instead vary the gap between the optimal representative and its closest competitor, while keeping the near-specialist gap fixed at $\xi=0.1$. This directly changes the difficulty of the few-for-many decision.

\begin{figure}[t]
  \centering
  \includegraphics[width=.86\textwidth]{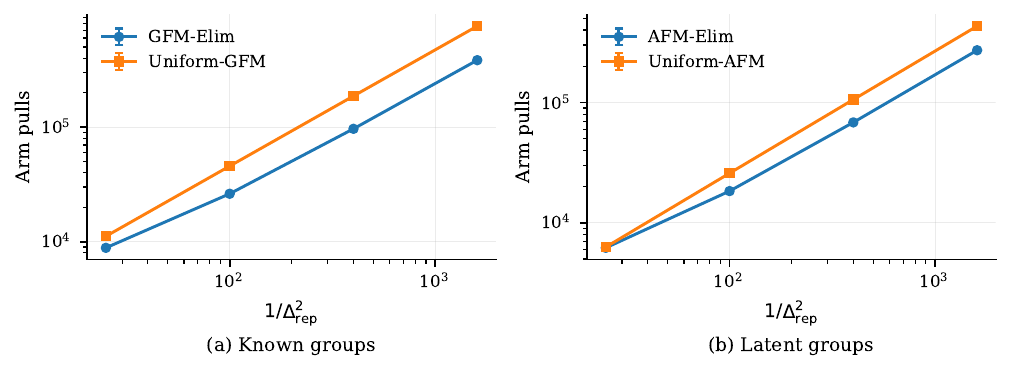}
  \caption{Sensitivity to the representative gap. Both known-group and latent-group identification become harder as the representative gap shrinks.}
  \label{fig:appendix-representative-gap}
\end{figure}

Figure~\ref{fig:appendix-representative-gap} exhibits the expected quadratic growth in sample complexity as the representative gap decreases. In the known-group case, the average sample complexity of \textsc{GFM-Elim} increases from the order of thousands to the order of hundreds of thousands as the representative gap shrinks from $0.2$ to $0.025$; the latent-group results for \textsc{AFM-Elim} follow the same pattern. This contrasts with Figure~\ref{fig:irrelevant-gap}: the proposed methods are insensitive to gaps that do not change the few-for-many answer, yet correctly spend more samples on gaps that determine the representative decision. Together, the two experiments indicate that the algorithms adapt to the right notion of difficulty.

\subsection{Computational Scalability of the Bicriteria Algorithm}
We further evaluate the polynomial-time \textsc{BiAFM-Greedy} algorithm. This experiment focuses on computational efficiency and complements, rather than replaces, the sample-complexity comparisons in Section~\ref{sec:experiments}. Unlike \textsc{AFM-Elim}, which maintains the combinatorial candidate family to identify the exact size-$L$ representative set, \textsc{BiAFM-Greedy} uses uniform sampling and returns a representative set under the relaxed budget.

We use the latent version of the synthetic family from Section~\ref{sec:experiments}, with $Q=4$ planted blocks of three objectives each ($M=12$), $L=4$, $\xi=0.05$, $\sigma=0.1$, and $\delta=0.05$. We vary the number of dominated arms, which increases $K$ and hence the number of exact candidate sets $|\mathcal S_L|=\binom{K}{L}$. For \textsc{BiAFM-Greedy}, we use $\epsilon=0.1$, which gives the relaxed budget $\lceil H_{12}L\rceil=13$. Each configuration is averaged over 30 independent trials with common random seeds. Reported runtimes exclude the post-hoc computation of the true optimum, which is used only for evaluation.

\begin{table}[t]
\caption{Supplementary latent-group experiment for the polynomial-time bicriteria procedure. \textsc{AFM-Elim} returns the exact size-$L$ optimum, whereas \textsc{BiAFM-Greedy} avoids enumerating $\mathcal S_L$ and returns a bicriteria set within the relaxed budget.}
\centering
\begin{tabular}{llrrrrr}
\toprule
$K$ & Method & $|\mathcal S_L|$ & Pulls & Time (ms) & $|\widehat S|$ & $\Phi(\widehat S)$ \\
\midrule
32 & \textsc{AFM-Elim} & 35,960 & 2,144 & 42.9 & 4 & 0.20 \\
32 & \textsc{BiAFM-Greedy} & 35,960 & 9,888 & 0.35 & 12 & 0.00 \\
48 & \textsc{AFM-Elim} & 194,580 & 2,480 & 238.5 & 4 & 0.20 \\
48 & \textsc{BiAFM-Greedy} & 194,580 & 15,456 & 0.43 & 12 & 0.00 \\
64 & \textsc{AFM-Elim} & 635,376 & 2,784 & 782.8 & 4 & 0.20 \\
64 & \textsc{BiAFM-Greedy} & 635,376 & 21,184 & 0.51 & 12 & 0.00 \\
\bottomrule
\end{tabular}
\label{tab:appendix-biafm}
\end{table}

Table~\ref{tab:appendix-biafm} summarizes the statistical--computational trade-off. On this instance, exact \textsc{AFM-Elim} is more sample-efficient: it respects the strict budget $L=4$ and uses only the samples needed to certify the exact optimal representative set. However, its runtime grows quickly with $K$, because the candidate family expands from tens of thousands to hundreds of thousands of sets. In contrast, \textsc{BiAFM-Greedy} samples all arms uniformly and therefore uses more pulls, but it never enumerates $\mathcal S_L$, and its runtime stays below one millisecond in all tested configurations. The returned set sizes and coverage losses reflect the bicriteria nature of the guarantee: \textsc{BiAFM-Greedy} returns a larger set, still within the relaxed logarithmic budget, which attains zero true coverage loss in this construction, whereas \textsc{AFM-Elim} returns the best strict size-$L$ set, whose coverage loss is nonzero.

\begin{figure}[t]
  \centering
  \includegraphics[width=.86\textwidth]{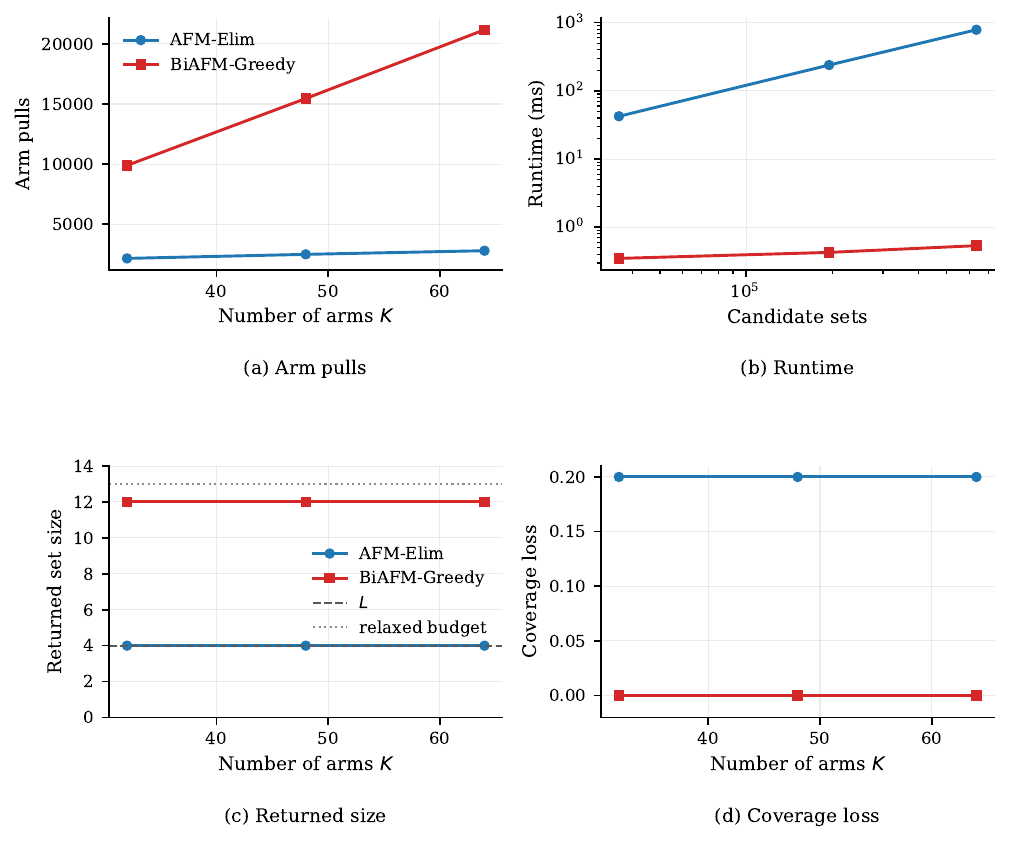}
  \caption{\textsc{BiAFM-Greedy} versus \textsc{AFM-Elim}. The four panels show arm pulls, runtime, returned set size, and true coverage loss as the exact candidate family grows.}
  \label{fig:appendix-biafm}
\end{figure}

Figure~\ref{fig:appendix-biafm} visualizes these trends across the four measured quantities. The sample panel shows the price of the uniform exploration in \textsc{BiAFM-Greedy}, while the runtime panel shows the benefit of replacing exact enumeration of candidate sets with a greedy cover computation. The set-size and coverage-loss panels make the approximation trade-off explicit: the bicriteria method returns more arms, and this enlarged set covers all objectives better than the strict size-$L$ optimum on this instance. Hence, \textsc{BiAFM-Greedy} is useful when exact enumeration of candidate sets is the computational bottleneck and a modest relaxation of the budget is acceptable.

\section{Analysis of the Grouped Method: \textsc{GFM-Elim}}
\subsection{Confidence bounds and correctness}

For every arm $a$, objective $m$, and phase $r$, let $\widehat\mu_{a,m}(n_r)$ denote the empirical mean of the first $n_r$ samples of arm $a$ on objective $m$. Consider the event
\begin{equation}\label{eq:good-event}
\mathcal E
=
\bigcap_{r=1}^{\infty}
\bigcap_{a\in[K]}
\bigcap_{m\in[M]}
\left\{
\left|
\widehat\mu_{a,m}(n_r)-\mu_{a,m}
\right|
\leq
\epsilon_r
\right\}.
\end{equation}

\begin{lem}\label{lem:confidence-event}
Under Eq.~(\ref{eq:subgaussian-assumption}),
$
\mathbb P_{\boldsymbol\mu}(\mathcal E)\geq1-\delta.
$
\end{lem}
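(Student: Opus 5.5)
Our plan is to take a union bound over phases, arms and objectives, with a sub-Gaussian tail bound at each fixed sample size $n_r$. The key point is that $\widehat\mu_{a,m}(n_r)$ is the mean of the \emph{first} $n_r$ samples of arm $a$, a deterministic count. It does not depend on the random time at which those samples were collected. Adaptive sampling could in principle break independence, so we will work with the martingale structure. Let $X_{a,m,1},X_{a,m,2},\ldots$ be the centered noise of the successive pulls of arm $a$ on coordinate $m$. By Eq.~(\ref{eq:subgaussian-assumption}), each increment is conditionally $\sigma^2$-sub-Gaussian given the past. The only properties used are that the $j$-th pull of arm $a$ happens at a stopping time and the conditional bound holds there. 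Alternatively, we will invoke the standard ``stack of rewards'' (reward table) model, in which the $j$-th observation of arm $a$ is drawn in advance.

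First, we fix $r,a,m$ and iterate the conditional moment bound. This shows that $\exp\big(s\sum_{j\le n}X_{a,m,j}-n\sigma^2s^2/2\big)$ is a supermartingale with expectation at most $1$. The Chernoff argument with optimized $s$ then gives
\[
\mathbb P\left(\left|\widehat\mu_{a,m}(n_r)-\mu_{a,m}\right|>\epsilon_r\right)
\le 2\exp\left(-\frac{n_r\epsilon_r^2}{2\sigma^2}\right).
\]
Second, we plug in $n_r\ge \frac{2\sigma^2}{\epsilon_r^2}\log\frac{2KM}{\delta_r}$ from Eq.~(\ref{eq:phase-confidence}). This bounds the right-hand side by $2\cdot\frac{\delta_r}{2KM}=\frac{\delta_r}{KM}$. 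Third, a union bound over the $KM$ pairs gives failure probability at most $\delta_r$ in phase $r$. A further union over $r\ge1$ gives at most $\sum_r \frac{6\delta}{\pi^2 r^2}=\delta$. Hence $\mathbb P_{\boldsymbol\mu}(\mathcal E^c)\le\delta$.

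The main obstacle is the first step. There we must justify the sub-Gaussian tail for the sample mean of an adaptively collected sequence, where the dependence on $\mathcal F_{t-1}$ matters. We will handle it either by the reward-table construction, which makes the first $n_r$ samples of each arm i.i.d.-like with the conditional sub-Gaussian property, or by the supermartingale argument indexed by pull count together with optional stopping. Note also that $\mathcal E$ includes all phases $r$ even if arm $a$ is never sampled $n_r$ times. Under the reward-table model this is harmless, because the statement concerns the hypothetical first $n_r$ draws. We will remark that only phases actually reached matter for the algorithm.
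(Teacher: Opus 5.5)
Your proposal is correct and follows the same route as the paper. Both use a per-pair sub-Gaussian tail bound at the deterministic sample size $n_r$, the choice of $n_r$ in Eq.~(\ref{eq:phase-confidence}) to obtain $\delta_r/(KM)$, and union bounds over the $KM$ pairs and over phases via $\sum_r\delta_r=\delta$; your extra justification of the tail bound under adaptive sampling (the pull-count-indexed supermartingale or reward-table construction, which also makes $\widehat\mu_{a,m}(n_r)$ well defined for phases never reached) fills in a step the paper simply cites as ``the sub-Gaussian concentration inequality.''
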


\begin{proof}
For every fixed $r$, $a$, and $m$, the sub-Gaussian concentration inequality gives
\[
\mathbb P_{\boldsymbol\mu}
\left(
\left|
\widehat\mu_{a,m}(n_r)-\mu_{a,m}
\right|
>
\epsilon_r
\right)
\leq
2\exp\left(
-\frac{n_r\epsilon_r^2}{2\sigma^2}
\right)
\leq
\frac{\delta_r}{KM}.
\]
A union bound over all arms and objectives yields
\[
\mathbb P_{\boldsymbol\mu}
\left(
\exists a\in[K],m\in[M]:
\left|
\widehat\mu_{a,m}(n_r)-\mu_{a,m}
\right|
>
\epsilon_r
\right)
\leq
\delta_r.
\]
Finally,
\[
\sum_{r=1}^{\infty}\delta_r
=
\frac{6\delta}{\pi^2}
\sum_{r=1}^{\infty}\frac{1}{r^2}
=
\delta,
\]
and a union bound over phases completes the proof.
\end{proof}

\begin{lem}\label{lem:loss-confidence}
On the event $\mathcal E$, the following statements hold at every phase $r$.

\begin{enumerate}
    \item The best arm of every objective remains in the corresponding benchmark set:
    \[
    b_m^\star\in\mathcal B_{m,r},
    \qquad
    \forall m\in[M].
    \]
    \item For every unresolved group $q$ and every $a\in\mathcal C_{q,r}$,
    \begin{equation}\label{eq:loss-ci-valid}
    \underline R_{q,r}(a)
    \leq
    R_q(a)
    \leq
    \overline R_{q,r}(a).
    \end{equation}
    \item The loss bounds satisfy
    \begin{equation}\label{eq:loss-ci-width}
    R_q(a)-4\overline\lambda_q\epsilon_r
    \leq
    \underline R_{q,r}(a)
    \leq
    \overline R_{q,r}(a)
    \leq
    R_q(a)+4\overline\lambda_q\epsilon_r.
    \end{equation}
\end{enumerate}
\end{lem}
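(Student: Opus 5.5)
The proof works on the event $\mathcal E$ throughout. The key preliminary observation is that every arm used at phase $r$ for an unresolved group has exactly $n_r$ observations, so its empirical mean is $\widehat\mu_{a,m}(n_r)$. Indeed, if $q\in\mathcal Q_r$ then $\mathcal C_{q,r}\subseteq\mathcal U_r$, and $\mathcal B_{m,r}\subseteq\mathcal U_r$ for all $m\in G_q\subseteq\mathcal M_r$. Hence $|\widehat\mu^{(r)}_{a,m}-\mu_{a,m}|\le\epsilon_r$ for every arm and objective entering the phase-$r$ computations, and I will use only this property.

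\emph{Statement 1, by induction on $r$.} The base case $r=1$ holds because $\mathcal B_{m,1}=[K]$. Suppose $b_m^\star\in\mathcal B_{m,r}$. If $m\notin\mathcal M_r$, the set is unchanged and there is nothing to prove. Otherwise, the induction hypothesis gives
\[
\widehat\mu^\star_{m,r}\ge\widehat\mu^{(r)}_{b_m^\star,m}\ge\mu_m^\star-\epsilon_r .
\]
Every $b\in\mathcal B_{m,r}$ satisfies $\widehat\mu^{(r)}_{b,m}\le\mu_{b,m}+\epsilon_r\le\mu_m^\star+\epsilon_r$, so $\widehat\mu^\star_{m,r}\le\mu_m^\star+\epsilon_r$. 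Together these give the two-sided bound
\[
|\widehat\mu^\star_{m,r}-\mu_m^\star|\le\epsilon_r ,
\]
which I will reuse below. It follows that $\widehat\mu^{(r)}_{b_m^\star,m}\ge\mu_m^\star-\epsilon_r\ge\widehat\mu^\star_{m,r}-2\epsilon_r$. Thus $b_m^\star$ passes the elimination test and belongs to $\mathcal B_{m,r+1}$.

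\emph{Statement 2.} Fix $q\in\mathcal Q_r$, $a\in\mathcal C_{q,r}$ and $m\in G_q$, and set $x=\widehat\mu^\star_{m,r}-\widehat\mu^{(r)}_{a,m}$. The two-sided bound and the accuracy of $\widehat\mu^{(r)}_{a,m}$ give $g_{m,a}-2\epsilon_r\le x\le g_{m,a}+2\epsilon_r$. Since $g_{m,a}\ge0$, we have $g_{m,a}=[g_{m,a}]_+$. Monotonicity of $[\cdot]_+$ then yields
\[
[x-2\epsilon_r]_+\le g_{m,a}\le[x+2\epsilon_r]_+ .
\]
Multiplying by $\lambda_m>0$ and taking $\max_{m\in G_q}$, which is monotone, gives Eq.~(\ref{eq:loss-ci-valid}).

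\emph{Statement 3.} The middle inequality is immediate because the bracket with $+2\epsilon_r$ dominates the bracket with $-2\epsilon_r$.

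For the upper bound, $[x+2\epsilon_r]_+\le g_{m,a}+4\epsilon_r$, since the right-hand side is nonnegative and exceeds $x+2\epsilon_r$. Then $\lambda_m(g_{m,a}+4\epsilon_r)\le\lambda_m g_{m,a}+4\overline\lambda_q\epsilon_r$. Maximizing over $m\in G_q$ gives $\overline R_{q,r}(a)\le R_q(a)+4\overline\lambda_q\epsilon_r$.

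For the lower bound, $[x-2\epsilon_r]_+\ge x-2\epsilon_r\ge g_{m,a}-4\epsilon_r$. So for each $m$, $\lambda_m[x-2\epsilon_r]_+\ge\lambda_m g_{m,a}-4\overline\lambda_q\epsilon_r$. Taking $\max_{m\in G_q}$ on both sides gives $\underline R_{q,r}(a)\ge R_q(a)-4\overline\lambda_q\epsilon_r$.

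\emph{Main obstacle.} The step needing care is the bookkeeping in the preliminary observation: checking that every arm used for an active objective is actually in $\mathcal U_r$ at phase $r$. Once that is in place, the two-sided bound $|\widehat\mu^\star_{m,r}-\mu_m^\star|\le\epsilon_r$ carries both Statements 2 and 3. This bound rests on the induction in Statement 1, so the three statements must be proved in that order.
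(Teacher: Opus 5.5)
Your proof is correct and follows the paper's argument. Both proceed by induction on $r$ to show $b_m^\star\in\mathcal B_{m,r}$, derive the two-sided bound $|\widehat\mu^\star_{m,r}-\mu_m^\star|\le\epsilon_r$, and then use the monotonicity and Lipschitz property of $[\cdot]_+$ (with $g_{m,a}\ge 0$) to get the bracketing and width bounds; your explicit checks that every arm entering the phase-$r$ computations lies in $\mathcal U_r$ (so has exactly $n_r$ samples) and that inactive benchmark sets are left unchanged fill in details the paper leaves implicit.
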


\begin{proof}
We first prove the benchmark-set property by induction. Initially, $\mathcal B_{m,1}=[K]$, so the claim holds. Suppose that $b_m^\star\in\mathcal B_{m,r}$. On $\mathcal E$,
\[
\widehat\mu_{b_m^\star,m}^{(r)}
\geq\mu_m^\star-\epsilon_r,
\qquad
\widehat\mu_{m,r}^\star
\leq\mu_m^\star+\epsilon_r.
\]
Hence, $\widehat\mu_{b_m^\star,m}^{(r)}\geq\widehat\mu_{m,r}^\star-2\epsilon_r$, and thus $b_m^\star\in\mathcal B_{m,r+1}$.

Since $b_m^\star\in\mathcal B_{m,r}$, the definition of the maximum and the event $\mathcal E$ imply
\begin{equation}\label{eq:empirical-ideal-concentration}
\left|\widehat\mu_{m,r}^\star-\mu_m^\star\right|\leq\epsilon_r.
\end{equation}
Consequently, for every arm $a$,
\[
\left|
\left(\widehat\mu_{m,r}^\star-\widehat\mu_{a,m}^{(r)}\right)
-
\left(\mu_m^\star-\mu_{a,m}\right)
\right|
\leq2\epsilon_r.
\]
Since the positive-part operator is nondecreasing and $1$-Lipschitz, and $\mu_m^\star-\mu_{a,m}\geq0$, subtracting and adding $2\epsilon_r$ as in Eq.~(\ref{eq:group-loss-estimated}) and taking the weighted maximum over $m\in G_q$ proves Eq.~(\ref{eq:loss-ci-valid}). The same display also yields
\[
\underline R_{q,r}(a)
\geq R_q(a)-4\overline\lambda_q\epsilon_r,
\qquad
\overline R_{q,r}(a)
\leq R_q(a)+4\overline\lambda_q\epsilon_r,
\]
which proves Eq.~(\ref{eq:loss-ci-width}).
\end{proof}

\begin{lem}[Elimination correctness]\label{thm:gse-correctness}
Algorithm~\ref{alg:gse} satisfies
\[
\mathbb P_{\boldsymbol\mu}
\left(
\tau<\infty,
\widehat{\bm a}_{\mathcal G}
=
\bm a_{\mathcal G}^\star
\right)
\geq
1-\delta.
\]
More precisely, on the event $\mathcal E$:

\begin{enumerate}
    \item $a_q^\star$ is never eliminated from $\mathcal C_{q,r}$;
    \item every $a\neq a_q^\star$ is eliminated from $\mathcal C_{q,r}$ at the first phase $r$ at which
    \[
    \epsilon_r
    <
    \frac{\Delta_{q,a}}{8\overline\lambda_q}
    \]
    and $a$ is still active in group $q$.
\end{enumerate}
\end{lem}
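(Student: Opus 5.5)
We work on the event $\mathcal E$ of Lemma~\ref{lem:confidence-event}, which has probability at least $1-\delta$. Everything then follows deterministically from the bracketing properties in Lemma~\ref{lem:loss-confidence}. Correctness will come from a "never eliminate the optimum" argument, and finite termination from a "suboptimal arms are eventually eliminated" argument. Together they give $\tau<\infty$ and $\widehat{\bm a}_{\mathcal G}=\bm a_{\mathcal G}^\star$ on $\mathcal E$.

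\emph{Part 1 (survival of $a_q^\star$).} We proceed by induction on $r$. Suppose $a_q^\star\in\mathcal C_{q,r}$ and $q$ is unresolved. By Eq.~(\ref{eq:loss-ci-valid}), for every $b\in\mathcal C_{q,r}$,
\[
\underline R_{q,r}(a_q^\star)\le R_q(a_q^\star)\le R_q(b)\le\overline R_{q,r}(b).
\]
Hence $\underline R_{q,r}(a_q^\star)\le\min_b\overline R_{q,r}(b)$, so $a_q^\star\in\mathcal C_{q,r+1}$. If $q$ is resolved, its set is frozen. Since $a_q^\star$ is never removed, a resolved group's singleton must be exactly $\{a_q^\star\}$.

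\emph{Part 2 (elimination of suboptimal arms).} Fix $a\neq a_q^\star$ still in $\mathcal C_{q,r}$, with $\epsilon_r<\Delta_{q,a}/(8\overline\lambda_q)$. Using Eq.~(\ref{eq:loss-ci-width}) and $a_q^\star\in\mathcal C_{q,r}$ from Part 1, we get
\[
\underline R_{q,r}(a)\ge R_q(a)-4\overline\lambda_q\epsilon_r
> R_q(a_q^\star)+\Delta_{q,a}-\tfrac12\Delta_{q,a}
= R_q(a_q^\star)+\tfrac12\Delta_{q,a}.
\]
On the other side,
\[
\min_b\overline R_{q,r}(b)\le\overline R_{q,r}(a_q^\star)\le R_q(a_q^\star)+4\overline\lambda_q\epsilon_r< R_q(a_q^\star)+\tfrac12\Delta_{q,a}.
\]
So $\underline R_{q,r}(a)>\min_b\overline R_{q,r}(b)$, and $a$ is removed at that phase.

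The only subtle point is that Lemma~\ref{lem:loss-confidence} is stated for active $a\in\mathcal C_{q,r}$ and unresolved $q$. Both conditions hold here, and the empirical benchmark $\widehat\mu^\star_{m,r}$ is computed for all $m\in G_q\subseteq\mathcal M_r$, so the brackets are well defined.

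\emph{Termination.} Since $\epsilon_r=2^{-r}\to0$ and $\Delta_{q,a}>0$ for all $a\ne a_q^\star$ (uniqueness of $a_q^\star$), there is a finite phase $r_q$ with $\epsilon_{r_q}<\rho_q$. By that phase every suboptimal arm has been eliminated from $\mathcal C_{q,\cdot}$. Thus every group is resolved after $\max_q r_q$ phases, and each phase uses finitely many pulls. Hence $\tau<\infty$ and the returned tuple is $\bm a_{\mathcal G}^\star$.

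We expect no major obstacle. The main care needed is bookkeeping: we must ensure $a_q^\star$ is still present so that it can serve as the comparator in Part 2, and that the induction over phases interleaves correctly with the benchmark-set induction of Lemma~\ref{lem:loss-confidence}.
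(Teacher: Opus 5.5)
Your proposal is correct and follows the paper's proof essentially step for step. You show survival of $a_q^\star$ via $\underline R_{q,r}(a_q^\star)\le R_q(a_q^\star)\le R_q(b)\le\overline R_{q,r}(b)$, eliminate a still-active suboptimal arm by comparing it against $\overline R_{q,r}(a_q^\star)$ using the $4\overline\lambda_q\epsilon_r$ width bounds, and get termination from $\epsilon_r\to0$, all on $\mathcal E$ with $\mathbb P_{\boldsymbol\mu}(\mathcal E)\ge1-\delta$; splitting the margin as $\tfrac12\Delta_{q,a}$ on each side and making the induction and well-definedness of the brackets explicit are only cosmetic refinements of the same argument.
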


\begin{proof}
Fix a group $q$. By Lemma~\ref{lem:loss-confidence},
\[
\underline R_{q,r}(a_q^\star)
\leq
R_q(a_q^\star).
\]
Moreover, for every $b\in\mathcal C_{q,r}$,
\[
\overline R_{q,r}(b)
\geq
R_q(b)
\geq
R_q(a_q^\star).
\]
Therefore,
\[
\underline R_{q,r}(a_q^\star)
\leq
\min_{b\in\mathcal C_{q,r}}
\overline R_{q,r}(b),
\]
so $a_q^\star$ is never eliminated.

Now consider an arm $a\neq a_q^\star$. By Eq.~(\ref{eq:loss-ci-width}),
\[
\underline R_{q,r}(a)
\geq
R_q(a)-4\overline\lambda_q\epsilon_r
\]
and, since $a_q^\star\in\mathcal C_{q,r}$,
\[
\min_{b\in\mathcal C_{q,r}}
\overline R_{q,r}(b)
\leq
\overline R_{q,r}(a_q^\star)
\leq
R_q(a_q^\star)+4\overline\lambda_q\epsilon_r.
\]
Hence,
\[
\underline R_{q,r}(a)
-
\min_{b\in\mathcal C_{q,r}}
\overline R_{q,r}(b)
\geq
\Delta_{q,a}
-
8\overline\lambda_q\epsilon_r.
\]
If
$
\epsilon_r
<
\frac{\Delta_{q,a}}{8\overline\lambda_q},
$
the right-hand side is strictly positive, and arm $a$ is eliminated. Since all suboptimal group gaps are positive and $\epsilon_r\to0$, every group eventually contains only $a_q^\star$ on $\mathcal E$. Combined with $\mathbb P_{\boldsymbol\mu}(\mathcal E)\geq1-\delta$ from Lemma~\ref{lem:confidence-event}, this proves the lemma.
\end{proof}

\subsection{Proof of Theorem~\ref{thm:gse-sample-complexity}}
\begin{proof}
We work on the event $\mathcal E$ throughout. Fix a group $q$. By Lemma~\ref{thm:gse-correctness}, every suboptimal representative candidate is eliminated once
\[
\epsilon_r<\frac{\Delta_{q,a}}{8\overline\lambda_q}.
\]
In particular, all suboptimal candidates are eliminated once $\epsilon_r<\rho_q$, so group $q$ is resolved no later than the first phase satisfying $\epsilon_r<\rho_q$.

Consider first the representative role of arm $a$ in group $q$. If $a\neq a_q^\star$, it is eliminated once
\[
\epsilon_r
<
\frac{\Delta_{q,a}}{8\overline\lambda_q}.
\]
If $a=a_q^\star$, it remains active until group $q$ is resolved. In both cases, this role has ended by the first phase satisfying
\[
\epsilon_r
<
\max\left\{
\rho_q,
\frac{\Delta_{q,a}}{8\overline\lambda_q}
\right\}.
\]

Next, consider the benchmark role of arm $a$ for an objective $m\in G_q$. If $g_{m,a}>4\epsilon_r$, then on $\mathcal E$,
\[
\widehat\mu_{a,m}^{(r)}
\leq
\mu_{a,m}+\epsilon_r
<
\mu_m^\star-3\epsilon_r
\leq
\widehat\mu_{m,r}^\star-2\epsilon_r,
\]
so $a$ is eliminated from $\mathcal B_{m,r+1}$. Otherwise, the benchmark role still ends once group $q$ is resolved, since $m$ then leaves $\mathcal M_r$. Therefore, this role has ended by the first phase satisfying
\[
\epsilon_r
<
\max\left\{
\rho_q,
\frac{g_{m,a}}{4}
\right\}.
\]

By Eq.~(\ref{eq:effective-arm-gap}), all roles of arm $a$ have ended by the phase
\[
r_a
=
\min\left\{
r\geq1:
\epsilon_r<\Gamma_a
\right\},
\]
so arm $a$ is pulled at most $n_{r_a}$ times.

By the minimality of $r_a$,
$
\epsilon_{r_a}
\geq
\frac{\Gamma_a}{2}
$
and
$
r_a
\leq
1+
\left\lceil
\log_2\frac{1}{\Gamma_a}
\right\rceil.
$
Using Eq.~(\ref{eq:phase-confidence}),
\[
\begin{aligned}
n_{r_a}
&\leq
1+
\frac{2\sigma^2}{\epsilon_{r_a}^2}
\log\left(
\frac{\pi^2KM r_a^2}{3\delta}
\right)\\
&\leq
1+
\frac{8\sigma^2}{\Gamma_a^2}
\log\left(
\frac{\pi^2KM}{3\delta}
\left(
1+
\left\lceil
\log_2\frac{1}{\Gamma_a}
\right\rceil
\right)^2
\right).
\end{aligned}
\]
Consequently, on $\mathcal E$, the total number of pulls is at most
\[
\sum_{a=1}^K
\left[
1+
\frac{8\sigma^2}{\Gamma_a^2}
\log\left(
\frac{\pi^2KM}{3\delta}
\left(
1+\left\lceil\log_2\frac{1}{\Gamma_a}\right\rceil
\right)^2
\right)
\right],
\]
which is of the order stated in Eq.~(\ref{eq:gse-explicit-complexity}). Since $\mathbb P_{\boldsymbol\mu}(\mathcal E)\geq1-\delta$ by Lemma~\ref{lem:confidence-event}, the high-probability guarantee follows.
\end{proof}

\section{Analysis of the Adaptive Method: \textsc{AFM-Elim}}
\subsection{Set-level confidence bounds and correctness}

Let $\mathcal E$ be the simultaneous confidence event defined in Eq.~(\ref{eq:good-event}). We first show that the loss bounds in Eq.~(\ref{eq:optimistic-pessimistic-set-loss}) are valid uniformly over all candidate sets. Importantly, this result does not require a union bound over the $\binom{K}{L}$ candidate sets.

\begin{lemma}[Set-level confidence bounds]\label{lem:set-level-confidence}
On the event $\mathcal E$, the following statements hold at every phase $r$.

\begin{enumerate}
    \item The best arm of every objective remains in its benchmark set:
    \[
    b_m^\star
    \in
    \mathcal B_{m,r},
    \qquad
    \forall m\in[M].
    \]

    \item For every $S\in\mathcal C_r$ and $m\in[M]$, the empirical objective-wise value satisfies
    \begin{equation}\label{eq:coverage-confidence-valid}
    \left|\widehat V_{m,r}(S)-V_m(S)\right|\leq\epsilon_r.
    \end{equation}

    \item For every $S\in\mathcal C_r$,
    \begin{equation}\label{eq:set-loss-confidence-valid}
    \underline\Phi_r(S)
    \leq
    \Phi(S)
    \leq
    \overline\Phi_r(S).
    \end{equation}

    \item The set-level loss bounds satisfy
    \begin{equation}\label{eq:set-loss-confidence-width}
    \Phi(S)-4\overline\lambda\epsilon_r
    \leq
    \underline\Phi_r(S)
    \leq
    \overline\Phi_r(S)
    \leq
    \Phi(S)+4\overline\lambda\epsilon_r.
    \end{equation}
\end{enumerate}
\end{lemma}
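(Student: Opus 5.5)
The plan is to mirror the proof of Lemma~\ref{lem:loss-confidence}, working on $\mathcal E$ throughout and proving the four items in order, since each relies on the previous one.

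\textbf{Item 1} is identical to the grouped case, and I would prove it by induction on $r$. The base case holds because $\mathcal B_{m,1}=[K]$. For the inductive step, suppose $b_m^\star\in\mathcal B_{m,r}$. On $\mathcal E$ we have $\widehat\mu_{b_m^\star,m}^{(r)}\geq\mu_m^\star-\epsilon_r$ and $\widehat\mu_{m,r}^\star\leq\max_{b}\mu_{b,m}+\epsilon_r=\mu_m^\star+\epsilon_r$. Hence $b_m^\star$ passes the $2\epsilon_r$ test and survives to $\mathcal B_{m,r+1}$. The only difference from the grouped case is that \textsc{AFM-Elim} updates every $m\in[M]$ at every phase, which makes the argument slightly simpler. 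As a consequence, $|\widehat\mu_{m,r}^\star-\mu_m^\star|\leq\epsilon_r$: the upper bound is as above, and the lower bound follows because $b_m^\star\in\mathcal B_{m,r}$.

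\textbf{Item 2} uses the elementary fact that a maximum of finitely many quantities is $1$-Lipschitz in the sup norm: $|\max_{a\in S}x_a-\max_{a\in S}y_a|\leq\max_{a\in S}|x_a-y_a|$. Two bookkeeping points need checking.
\begin{itemize}
\item Every $a\in S$ with $S\in\mathcal C_r$ lies in $\mathcal U_r$, so it has $n_r$ samples. Then $\widehat\mu_{a,m}^{(r)}=\widehat\mu_{a,m}(n_r)$ and $\mathcal E$ applies to it. This is exactly why the guarantee needs no union bound over the $\binom{K}{L}$ sets: $\mathcal E$ is stated per arm and objective, and the set-level bound is a deterministic consequence.
\item Similarly, every $b\in\mathcal B_{m,r}$ is in $\mathcal U_r$, which justifies the use of $\mathcal E$ for the benchmark in Item 1.
\end{itemize}

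\textbf{Items 3 and 4.} Combining Items 1 and 2 gives
\[
\left|\bigl(\widehat\mu_{m,r}^\star-\widehat V_{m,r}(S)\bigr)-\bigl(\mu_m^\star-V_m(S)\bigr)\right|\leq2\epsilon_r ,
\]
with $\mu_m^\star-V_m(S)\geq0$. Hence $[x-2\epsilon_r]_+\leq\mu_m^\star-V_m(S)\leq[x+2\epsilon_r]_+$, where $x$ is the empirical gap. Since the positive part is monotone and $1$-Lipschitz, we also get $[x+2\epsilon_r]_+\leq\mu_m^\star-V_m(S)+4\epsilon_r$ and $[x-2\epsilon_r]_+\geq\mu_m^\star-V_m(S)-4\epsilon_r$. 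Multiplying by $\lambda_m\leq\overline\lambda$ and taking $\max_{m\in[M]}$, which is monotone and $1$-Lipschitz, yields Eq.~(\ref{eq:set-loss-confidence-valid}) and Eq.~(\ref{eq:set-loss-confidence-width}).

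\textbf{Main obstacle.} There is no real analytic difficulty here. The one point needing care is the sampling bookkeeping in Item 2: confirming that the empirical means used in phase $r$ are exactly the first-$n_r$-sample means covered by $\mathcal E$ for every arm appearing in some active set or benchmark set. Everything else reduces to Lipschitz properties of $\max$ and $[\cdot]_+$.
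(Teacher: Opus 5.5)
Your proposal is correct and follows essentially the same route as the paper. The paper also proves the benchmark-set property by the induction from Lemma~\ref{lem:loss-confidence}, uses nonexpansiveness of the maximum for the set values, and obtains both loss brackets from the monotonicity and $1$-Lipschitz continuity of $[\cdot]_+$; your explicit check that every arm in an active set or benchmark set has exactly $n_r$ samples is a detail the paper leaves implicit.
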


\begin{proof}
The benchmark-set statement follows from the same induction argument as in Lemma~\ref{lem:loss-confidence}, so $b_m^\star$ is never removed from $\mathcal B_{m,r}$. On $\mathcal E$, the nonexpansiveness of the maximum operator gives
\[
\left|\widehat V_{m,r}(S)-V_m(S)\right|
\leq
\max_{a\in S}\left|\widehat\mu_{a,m}^{(r)}-\mu_{a,m}\right|
\leq\epsilon_r,
\]
which proves Eq.~(\ref{eq:coverage-confidence-valid}). Because the best arm remains in $\mathcal B_{m,r}$, Eq.~(\ref{eq:empirical-ideal-concentration}) also holds. Therefore,
\[
\left|
\left(\widehat\mu_{m,r}^\star-\widehat V_{m,r}(S)\right)
-
\left(\mu_m^\star-V_m(S)\right)
\right|
\leq2\epsilon_r.
\]
The monotonicity and $1$-Lipschitz continuity of the positive-part operator in Eq.~(\ref{eq:optimistic-pessimistic-set-loss}) then yield Eq.~(\ref{eq:set-loss-confidence-valid}), as well as
\[
\underline\Phi_r(S)
\geq\Phi(S)-4\overline\lambda\epsilon_r,
\qquad
\overline\Phi_r(S)
\leq\Phi(S)+4\overline\lambda\epsilon_r,
\]
which proves Eq.~(\ref{eq:set-loss-confidence-width}).
\end{proof}

The set-level confidence event depends only on the $KM$ base-arm means. Consequently, the confidence radius scales logarithmically in $KM$ rather than in the exponentially large number of candidate sets.

\begin{lem}[Set-elimination correctness]\label{thm:afm-correctness}
Algorithm~\ref{alg:afm-elim} is $\delta$-PAC. More precisely, on the event $\mathcal E$:

\begin{enumerate}
    \item the optimal representative set $S_L^\star$ is never eliminated;
    \item every $S\neq S_L^\star$ that is still active is eliminated at the first phase satisfying
    \begin{equation}\label{eq:set-elimination-condition}
    \epsilon_r
    <
    \frac{\Delta_L(S)}{8\overline\lambda}.
    \end{equation}
\end{enumerate}
Therefore,
\[
\mathbb P_{\boldsymbol\mu}
\left(
\tau<\infty,
\widehat S_L=S_L^\star
\right)
\geq
1-\delta.
\]
\end{lem}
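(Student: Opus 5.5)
The argument mirrors the proof of Lemma~\ref{thm:gse-correctness}, now at the level of candidate sets and using Lemma~\ref{lem:set-level-confidence} in place of Lemma~\ref{lem:loss-confidence}. Throughout we work on the event $\mathcal E$ of Eq.~(\ref{eq:good-event}), which has probability at least $1-\delta$ by Lemma~\ref{lem:confidence-event}. Since the confidence event is defined over base arms, no union bound over $\mathcal S_L$ is needed.

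\emph{Step 1: $S_L^\star$ is never eliminated.} We argue by induction on $r$. Suppose $S_L^\star\in\mathcal C_r$. By Eq.~(\ref{eq:set-loss-confidence-valid}), $\underline\Phi_r(S_L^\star)\leq\Phi(S_L^\star)$. For every $S'\in\mathcal C_r$, again by Eq.~(\ref{eq:set-loss-confidence-valid}),
\[
\overline\Phi_r(S')\geq\Phi(S')\geq\Phi(S_L^\star),
\]
where the last inequality holds by optimality of $S_L^\star$ over $\mathcal S_L\supseteq\mathcal C_r$. Hence $\underline\Phi_r(S_L^\star)\leq\min_{S'\in\mathcal C_r}\overline\Phi_r(S')$, and therefore $S_L^\star\in\mathcal C_{r+1}$.

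\emph{Step 2: suboptimal sets are eliminated.} Let $S\neq S_L^\star$ be active at phase $r$. Using Eq.~(\ref{eq:set-loss-confidence-width}) together with $S_L^\star\in\mathcal C_r$ from Step 1,
\[
\underline\Phi_r(S)-\min_{S'\in\mathcal C_r}\overline\Phi_r(S')
\geq
\bigl(\Phi(S)-4\overline\lambda\epsilon_r\bigr)-\bigl(\Phi(S_L^\star)+4\overline\lambda\epsilon_r\bigr)
=\Delta_L(S)-8\overline\lambda\epsilon_r .
\]
This is strictly positive once Eq.~(\ref{eq:set-elimination-condition}) holds, so $S$ is removed at the first such phase at which it is still active.

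\emph{Step 3: termination and correctness.} Uniqueness of $S_L^\star$ gives $\Delta_{L,\min}>0$. Because $\epsilon_r=2^{-r}\to0$, at the first phase with $\epsilon_r<\rho_L$ every suboptimal set has been removed. Thus $\mathcal C_{r+1}=\{S_L^\star\}$, the algorithm stops with $\tau<\infty$, and it returns $S_L^\star$. Combining this with $\mathbb P_{\boldsymbol\mu}(\mathcal E)\geq1-\delta$ yields the $\delta$-PAC statement.

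The main point needing care is Step 1. The pessimistic bound $\overline\Phi_r(S')\geq\Phi(S')$ must hold for \emph{all} active sets simultaneously, and this relies on $\widehat\mu^\star_{m,r}$ being within $\epsilon_r$ of $\mu_m^\star$. That accuracy in turn requires $b_m^\star\in\mathcal B_{m,r}$, which is exactly item 1 of Lemma~\ref{lem:set-level-confidence}. Everything else follows directly from the width bound.
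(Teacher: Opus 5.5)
Your proposal is correct and follows essentially the same argument as the paper. The bracketing and width bounds of Lemma~\ref{lem:set-level-confidence} give $\underline\Phi_r(S_L^\star)\leq\min_{S'\in\mathcal C_r}\overline\Phi_r(S')$ and $\underline\Phi_r(S)-\min_{S'\in\mathcal C_r}\overline\Phi_r(S')\geq\Delta_L(S)-8\overline\lambda\epsilon_r$, and termination follows from $\Delta_{L,\min}>0$ and $\epsilon_r\to0$; your explicit induction in Step 1 and your identification of the stopping phase as the first $r$ with $\epsilon_r<\rho_L$ (any earlier stop must return $S_L^\star$, since it is never eliminated) only make the paper's reasoning slightly more explicit.
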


\begin{proof}
Let $B_r=\min_{S'\in\mathcal C_r}\overline\Phi_r(S')$ denote the elimination threshold at phase $r$. By Lemma~\ref{lem:set-level-confidence},
\[
\underline\Phi_r(S_L^\star)
\leq
\Phi(S_L^\star),
\]
and, for every $S\in\mathcal C_r$,
\[
\overline\Phi_r(S)
\geq
\Phi(S)
\geq
\Phi(S_L^\star).
\]
It follows that
\[
\underline\Phi_r(S_L^\star)
\leq
B_r,
\]
so $S_L^\star$ is never eliminated.

Now fix a suboptimal set $S\neq S_L^\star$. By Eq.~(\ref{eq:set-loss-confidence-width}),
\[
\underline\Phi_r(S)
\geq
\Phi(S)-4\overline\lambda\epsilon_r,
\]
and, since $S_L^\star\in\mathcal C_r$,
\[
B_r
\leq
\overline\Phi_r(S_L^\star)
\leq
\Phi(S_L^\star)+4\overline\lambda\epsilon_r.
\]
Therefore,
\[
\underline\Phi_r(S)-B_r
\geq
\Delta_L(S)-8\overline\lambda\epsilon_r.
\]
Whenever Eq.~(\ref{eq:set-elimination-condition}) holds, the right-hand side is strictly positive, and hence $S$ is removed from $\mathcal C_{r+1}$.

Since $\Delta_{L,\min}>0$ and $\epsilon_r\to0$, every suboptimal candidate set is eventually eliminated on $\mathcal E$. Combined with $\mathbb P_{\boldsymbol\mu}(\mathcal E)\geq1-\delta$ from Lemma~\ref{lem:confidence-event}, this proves the result.
\end{proof}

\subsection{Proof of Theorem~\ref{thm:afm-sample-complexity}}
\begin{proof}
We work on the event $\mathcal E$ throughout. Fix a base arm $a$, and consider first its role in the active candidate family.

If $a\notin S_L^\star$, every candidate set containing $a$ is suboptimal. By Lemma~\ref{thm:afm-correctness}, each such set $S$ is eliminated once
\[
\epsilon_r
<
\frac{\Delta_L(S)}{8\overline\lambda}.
\]
Therefore, all candidate sets containing $a$ have been eliminated once
\[
\epsilon_r
<
\min_{\substack{S\in\mathcal S_L\\a\in S}}
\frac{\Delta_L(S)}{8\overline\lambda}.
\]
If $a\in S_L^\star$, then $a$ remains in the union of active candidate sets until the algorithm terminates. Since every suboptimal set is eliminated once $\epsilon_r<\rho_L$, this role ends once
\[
\epsilon_r<\rho_L.
\]

Next, consider the benchmark role of arm $a$ for objective $m$. If
\[
g_{m,a}>4\epsilon_r,
\]
then, on $\mathcal E$,
\[
\widehat\mu_{a,m}^{(r)}
\leq
\mu_{a,m}+\epsilon_r
<
\mu_m^\star-3\epsilon_r
\leq
\widehat\mu_{m,r}^\star-2\epsilon_r,
\]
so $a$ is removed from $\mathcal B_{m,r+1}$. Otherwise, this role ends when the algorithm terminates. Hence, the benchmark role has ended by the first phase satisfying
\[
\epsilon_r
<
\max\left\{
\rho_L,
\frac{g_{m,a}}{4}
\right\}.
\]

By Eq.~(\ref{eq:latent-effective-gap}), all roles of arm $a$ have ended by the phase
\[
r_a
=
\min\left\{
r\geq1:
\epsilon_r<\Gamma_{L,a}
\right\},
\]
so the arm is pulled at most $n_{r_a}$ times. By the minimality of $r_a$,
\[
\epsilon_{r_a}
\geq
\frac{\Gamma_{L,a}}{2},
\qquad
r_a
\leq
1+
\left\lceil
\log_2\frac{1}{\Gamma_{L,a}}
\right\rceil.
\]
Substituting these inequalities into the definition of $n_r$ gives
\[
n_{r_a}
\leq
1+
\frac{8\sigma^2}{\Gamma_{L,a}^2}
\log\left(
\frac{\pi^2KM}{3\delta}
\left(
1+
\left\lceil
\log_2\frac{1}{\Gamma_{L,a}}
\right\rceil
\right)^2
\right).
\]
Consequently, on $\mathcal E$, the total number of pulls is at most
\[
\sum_{a=1}^K
\left[
1+
\frac{8\sigma^2}{\Gamma_{L,a}^2}
\log\left(
\frac{\pi^2KM}{3\delta}
\left(
1+\left\lceil\log_2\frac{1}{\Gamma_{L,a}}\right\rceil
\right)^2
\right)
\right],
\]
which is of the order stated in Eq.~(\ref{eq:afm-explicit-sample-complexity}). Since $\mathbb P_{\boldsymbol\mu}(\mathcal E)\geq1-\delta$ by Lemma~\ref{lem:confidence-event}, the high-probability guarantee follows.
\end{proof}

\subsection{Recovery of the latent objective groups}\label{app:latent-group-recovery}

Identifying $S_L^\star$ does not by itself determine a unique objective partition. To recover a unique assignment, we impose the following condition.

\begin{assumption}\label{ass:unique-latent-assignment}
For every objective $m\in[M]$, there exists a unique arm
$\pi_L^\star(m)\in S_L^\star$ satisfying
\[
\pi_L^\star(m)
=
\arg\max_{a\in S_L^\star}\mu_{a,m}.
\]
\end{assumption}

The latent group represented by $a\in S_L^\star$ is then
\begin{equation}
G_a^\star
=
\left\{
m\in[M]:
\pi_L^\star(m)=a
\right\}.
\end{equation}

When $L=1$, the assignment is trivial and requires no additional samples. We therefore assume $L\geq2$ below and define the assignment gap
\begin{equation}
\zeta_m
=
\mu_{\pi_L^\star(m),m}
-
\max_{a\in S_L^\star\setminus\{\pi_L^\star(m)\}}
\mu_{a,m},
\end{equation}
together with
\[
\zeta_{\min}
=
\min_{m\in[M]}\zeta_m.
\]

After $S_L^\star$ has been identified, the learner can continue sampling its $L$ arms until, for every objective $m$, there exists an arm $\widehat\pi(m)\in S_L^\star$ such that
\begin{equation}\label{eq:assignment-stopping}
\widehat\mu_{\widehat\pi(m),m}^{(r)}
-
\max_{a\in S_L^\star\setminus\{\widehat\pi(m)\}}
\widehat\mu_{a,m}^{(r)}
>
2\epsilon_r.
\end{equation}
The output groups are
\[
\widehat G_a
=
\left\{
m\in[M]:
\widehat\pi(m)=a
\right\},
\qquad
a\in S_L^\star.
\]

\begin{corollary}[Latent-group recovery]
Suppose that $L\geq2$ and that Assumption~\ref{ass:unique-latent-assignment} holds. On the event $\mathcal E$, condition Eq.~(\ref{eq:assignment-stopping}) is satisfied for all objectives once
\[
\epsilon_r
<
\frac{\zeta_{\min}}{4},
\]
and the resulting assignments and groups satisfy
\[
\widehat\pi(m)=\pi_L^\star(m),
\qquad
\widehat G_a=G_a^\star.
\]
The additional sample complexity is at most
\begin{equation}\label{eq:assignment-sample-complexity}
\widetilde O\left(
\frac{\sigma^2L}{\zeta_{\min}^2}
\log\frac{KM}{\delta}
\right).
\end{equation}
\end{corollary}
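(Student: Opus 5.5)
We work on the event $\mathcal E$ from Eq.~(\ref{eq:good-event}) throughout. Since the assignment phase continues the same phase schedule, every empirical mean $\widehat\mu_{a,m}^{(r)}$ for $a\in S_L^\star$ is within $\epsilon_r$ of its true value at every phase. The assignment phase only reuses and extends samples of the $L$ arms in $S_L^\star$, so Lemma~\ref{lem:confidence-event} covers it and no additional union bound is needed. By Lemma~\ref{thm:afm-correctness}, on $\mathcal E$ the set passed to the assignment step is indeed $S_L^\star$.

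\textbf{Correctness.} Fix an objective $m$. First, suppose some arm $\widehat\pi(m)$ satisfies Eq.~(\ref{eq:assignment-stopping}). For every other $a\in S_L^\star$,
\[
\mu_{\widehat\pi(m),m}-\mu_{a,m}\geq \widehat\mu_{\widehat\pi(m),m}^{(r)}-\widehat\mu_{a,m}^{(r)}-2\epsilon_r>0 .
\]
So $\widehat\pi(m)$ is the strict maximizer over $S_L^\star$, which by Assumption~\ref{ass:unique-latent-assignment} means $\widehat\pi(m)=\pi_L^\star(m)$. This holds at whatever phase the condition is first met, so any certified assignment is correct, and $\widehat G_a=G_a^\star$ follows immediately.

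\textbf{Stopping.} Next, suppose $\epsilon_r<\zeta_{\min}/4$. Then for $\pi=\pi_L^\star(m)$ and every $a\neq\pi$ in $S_L^\star$,
\[
\widehat\mu_{\pi,m}^{(r)}-\widehat\mu_{a,m}^{(r)}\geq \zeta_m-2\epsilon_r\geq\zeta_{\min}-2\epsilon_r>2\epsilon_r .
\]
So Eq.~(\ref{eq:assignment-stopping}) holds simultaneously for all $m$.

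\textbf{Sample complexity.} Let $r'$ be the first phase with $\epsilon_{r'}<\zeta_{\min}/4$. Then $\epsilon_{r'}\geq\zeta_{\min}/8$ and $r'\leq 1+\lceil\log_2(4/\zeta_{\min})\rceil$. Each of the $L$ arms is sampled up to $n_{r'}$ observations, so by Eq.~(\ref{eq:phase-confidence}) the number of additional pulls is at most
\[
L\,n_{r'}\leq L\Bigl(1+\frac{128\sigma^2}{\zeta_{\min}^2}\log\frac{\pi^2KMr'^2}{3\delta}\Bigr).
\]
This is Eq.~(\ref{eq:assignment-sample-complexity}) once the $\log r'^2$ term is absorbed into $\widetilde O$.

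The only subtle point is the restart convention. If the phase index at the end of \textsc{AFM-Elim} already exceeds $r'$, the condition holds immediately and no extra samples are needed, so the bound still holds. Otherwise, the additional pulls are bounded by the total count above.
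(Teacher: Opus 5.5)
Your proposal is correct and follows the paper's proof essentially step for step. You use the $\pm\epsilon_r$ bounds on $\mathcal E$ in both directions: any certified $\widehat\pi(m)$ must be the unique maximizer over $S_L^\star$, and certification occurs for all $m$ once $\epsilon_r<\zeta_{\min}/4$; the $L$ selected arms are then sampled up to $n_r$ at the first such phase. Your explicit constant $128\sigma^2/\zeta_{\min}^2$ and the remark about the phase index inherited from \textsc{AFM-Elim} just spell out what the paper summarizes as ``the same calculation as in the proof of Theorem~\ref{thm:gse-sample-complexity}.''
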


\begin{proof}
For every objective $m$, on $\mathcal E$,
\[
\widehat\mu_{\pi_L^\star(m),m}^{(r)}
-
\max_{a\in S_L^\star\setminus\{\pi_L^\star(m)\}}
\widehat\mu_{a,m}^{(r)}
\geq
\zeta_m-2\epsilon_r.
\]
When $\epsilon_r<\zeta_{\min}/4$, the right-hand side exceeds $2\epsilon_r$, so Eq.~(\ref{eq:assignment-stopping}) holds with $\widehat\pi(m)=\pi_L^\star(m)$. Conversely, on $\mathcal E$, any arm $\widehat\pi(m)$ satisfying Eq.~(\ref{eq:assignment-stopping}) has a true mean strictly larger than those of all other arms in $S_L^\star$, so the certified assignment is correct. At the first phase with $\epsilon_r<\zeta_{\min}/4$, the same calculation as in the proof of Theorem~\ref{thm:gse-sample-complexity} gives $n_r=\widetilde O(\sigma^2\zeta_{\min}^{-2}\log(KM/\delta))$. Sampling the $L$ selected arms to this precision proves Eq.~(\ref{eq:assignment-sample-complexity}).
\end{proof}

\section{Proof of Theorem~\ref{thm:latent-lower-bound}}
\begin{proof}
If $\mathbb E_{\boldsymbol\mu}[\tau]=+\infty$, the lower bound holds trivially. We therefore assume $\mathbb E_{\boldsymbol\mu}[\tau]<+\infty$.

Let
\[
E
=
\left\{
\widehat S_L=S_L^\star(\boldsymbol\mu)
\right\}.
\]
Since the algorithm is $\delta$-PAC,
\[
\mathbb P_{\boldsymbol\mu}(E)\geq1-\delta.
\]
For every $\boldsymbol\nu\in\operatorname{Alt}_L(\boldsymbol\mu)$, the set $S_L^\star(\boldsymbol\mu)$ is not optimal under $\boldsymbol\nu$, and hence
\[
\mathbb P_{\boldsymbol\nu}(E)\leq\delta.
\]

The change-of-measure inequality for adaptive sampling gives
\[
\sum_{a=1}^K
\mathbb E_{\boldsymbol\mu}[N_a(\tau)]
d(\boldsymbol\mu_a,\boldsymbol\nu_a)
\geq
\operatorname{KL}
\left(
\mathbb P_{\boldsymbol\mu}(E),
\mathbb P_{\boldsymbol\nu}(E)
\right)
\geq
\operatorname{KL}(1-\delta,\delta),
\]
where the last step uses the monotonicity of $\operatorname{KL}$ and $\delta<1/2$. Let
\[
w_a
=
\frac{
\mathbb E_{\boldsymbol\mu}[N_a(\tau)]
}{
\mathbb E_{\boldsymbol\mu}[\tau]
}.
\]
Then $\boldsymbol w\in\Delta_K$, and since the previous inequality holds for every alternative,
\[
\mathbb E_{\boldsymbol\mu}[\tau]
\inf_{\boldsymbol\nu\in\operatorname{Alt}_L(\boldsymbol\mu)}
\sum_{a=1}^K
w_a
d(\boldsymbol\mu_a,\boldsymbol\nu_a)
\geq
\operatorname{KL}(1-\delta,\delta).
\]
By the definition of $T_L^\star(\boldsymbol\mu)$,
\[
\inf_{\boldsymbol\nu\in\operatorname{Alt}_L(\boldsymbol\mu)}
\sum_{a=1}^K
w_a
d(\boldsymbol\mu_a,\boldsymbol\nu_a)
\leq
\left(
T_L^\star(\boldsymbol\mu)
\right)^{-1}.
\]
Combining the last two inequalities proves Eq.~(\ref{eq:latent-lower-bound}). Finally, since
\[
\operatorname{KL}(1-\delta,\delta)
\sim
\log\frac{1}{\delta}
\]
as $\delta\downarrow0$, Eq.~(\ref{eq:latent-asymptotic-lower-bound}) follows.
\end{proof}

\section{Proof of Theorem~\ref{thm:biafm-greedy}}

\begin{proof}
Set $\eta=\epsilon/(4\overline\lambda)$ and define
\[
\mathcal E_{\rm Bi}
=
\left\{
|\widehat\mu_{a,m}-\mu_{a,m}|\leq\eta,
\ \forall a\in[K],m\in[M]
\right\}.
\]
For each arm--objective pair, the sub-Gaussian concentration inequality gives
\[
\mathbb P_{\boldsymbol\mu}\left(|\widehat\mu_{a,m}-\mu_{a,m}|>\eta\right)
\leq2\exp\left(-\frac{n\eta^2}{2\sigma^2}\right).
\]
The prescribed value of $n$ and $\eta^2=\epsilon^2/(16\overline\lambda^2)$ imply
\[
\frac{n\eta^2}{2\sigma^2}
\geq
\log\left(\frac{2KM}{\delta}\right).
\]
Hence, each failure probability is at most $\delta/(KM)$, and a union bound yields $\mathbb P_{\boldsymbol\mu}(\mathcal E_{\rm Bi})\geq1-\delta$.

On $\mathcal E_{\rm Bi}$, the nonexpansiveness of the maximum operator gives, for every $m$,
\[
|\widehat\mu_m^\star-\mu_m^\star|
=
\left|\max_a\widehat\mu_{a,m}-\max_a\mu_{a,m}\right|
\leq\max_a|\widehat\mu_{a,m}-\mu_{a,m}|
\leq\eta.
\]
Consequently, for every $a$ and $m$,
\begin{equation}\label{eq:biafm-deficit-concentration}
|\widehat d_{a,m}-d_{a,m}|
\leq
\lambda_m\left(|\widehat\mu_m^\star-\mu_m^\star|+|\widehat\mu_{a,m}-\mu_{a,m}|\right)
\leq2\overline\lambda\eta
=
\frac{\epsilon}{2}.
\end{equation}
Define the empirical loss
\begin{equation}\label{eq:empirical-weighted-loss}
\widehat\Phi(S)=\max_{m\in[M]}\min_{a\in S}\widehat d_{a,m}.
\end{equation}
Since both the minimum and the maximum operators are nonexpansive in the sup norm, Eq.~(\ref{eq:biafm-deficit-concentration}) implies, simultaneously for every nonempty $S\subseteq[K]$,
\begin{equation}\label{eq:biafm-uniform-loss-concentration}
|\widehat\Phi(S)-\Phi(S)|\leq\frac{\epsilon}{2}.
\end{equation}

Let $\gamma_{\rm emp}^\star=\widehat\Phi(S_L^\star)$. For each objective $m$, some arm in $S_L^\star$ has an empirical deficit of at most $\gamma_{\rm emp}^\star$, so the $L$ arms in $S_L^\star$ cover $[M]$ at this threshold. Thus,
\begin{equation}\label{eq:empirical-threshold-cover-size}
\operatorname{OPT}_{\rm SC}(\gamma_{\rm emp}^\star)\leq L,
\qquad
|S_{\gamma_{\rm emp}^\star}|\leq H_ML.
\end{equation}
Moreover, because both extrema in Eq.~(\ref{eq:empirical-weighted-loss}) are taken over finite sets, there exist $m^\prime\in[M]$ and $a^\prime\in S_L^\star$ such that $\gamma_{\rm emp}^\star=\widehat d_{a^\prime,m^\prime}$; hence $\gamma_{\rm emp}^\star\in\widehat{\mathcal T}$. Algorithm~\ref{alg:biafm-greedy} scans the thresholds in increasing order, and Eq.~(\ref{eq:empirical-threshold-cover-size}), together with the fact that a greedy cover uses at most $K$ distinct arms, shows that the acceptance test is passed no later than at $\gamma_{\rm emp}^\star$. The returned threshold $\widehat\gamma$ therefore satisfies $\widehat\gamma\leq\gamma_{\rm emp}^\star$.

Because $\widehat S$ covers every objective at threshold $\widehat\gamma$, we have $\widehat\Phi(\widehat S)\leq\widehat\gamma$. Combining this with Eq.~(\ref{eq:biafm-uniform-loss-concentration}) gives
\[
\begin{aligned}
\Phi(\widehat S)
&\leq\widehat\Phi(\widehat S)+\frac{\epsilon}{2}
\leq\widehat\gamma+\frac{\epsilon}{2}
\leq\widehat\Phi(S_L^\star)+\frac{\epsilon}{2}
\leq\Phi(S_L^\star)+\epsilon.
\end{aligned}
\]
The budget bound follows directly from the acceptance test, and the preceding argument holds on an event of probability at least $1-\delta$.

The algorithm uses exactly $Kn$ arm pulls, which gives Eq.~(\ref{eq:biafm-sample-complexity}). There are at most $KM$ distinct thresholds. Under a direct implementation, computing the empirical means costs $O(KMn)$ scalar operations, sorting the thresholds costs $O(KM\log(KM))$, and constructing all coverage sets and running the greedy cover at every threshold costs at most $O(K^3M^2)$. Hence, the overall computation is polynomial in the stated parameters.
\end{proof}

\section{Proof of Theorem~\ref{thm:afm-asymptotic-optimality}}
\begin{proof}
The $\delta$-PAC property follows directly from Eq.~(\ref{eq:valid-glr-threshold}): an incorrect recommendation can occur only if, for some $t$,
\[
\widehat S_t
\neq
S_L^\star(\boldsymbol\mu)
\quad\text{and}\quad
Z_L(t)\geq\beta(t,\delta),
\]
and this event has probability at most $\delta$.

Forced exploration guarantees that
\[
N_a(t)\longrightarrow\infty,
\qquad
\forall a\in[K].
\]
Therefore, by the strong law of large numbers,
\[
\widehat{\boldsymbol\mu}_a(t)
\longrightarrow
\boldsymbol\mu_a
\quad\text{almost surely}.
\]
Since $\Delta_{L,\min}>0$, the empirical optimal set eventually coincides with the true optimal set, i.e.,
\[
\widehat S_t=S_L^\star
\]
for all sufficiently large $t$, almost surely.

For $\boldsymbol w\in\Delta_K$ and an instance $\boldsymbol\theta$, define the information rate
\[
I_L(\boldsymbol w,\boldsymbol\theta)
=
\inf_{\boldsymbol\nu\in\operatorname{Alt}_L(\boldsymbol\theta)}
\sum_{a=1}^K w_a d(\boldsymbol\theta_a,\boldsymbol\nu_a).
\]
Let $\boldsymbol w(t)=\mathbf N(t)/t$. Once the empirical answer has stabilized, Eq.~(\ref{eq:latent-glr-statistic}) gives
\[
\frac{Z_L(t)}{t}
=
I_L\bigl(\boldsymbol w(t),\widehat{\boldsymbol\mu}(t)\bigr).
\]
The exact allocation oracle and the tracking step of Algorithm~\ref{alg:afm-tas} constitute precisely the Track-and-Stop tracking rule applied to the latent answer space. The standard tracking argument therefore implies that, along the realized trajectory, the empirical sampling proportions approach the optimal allocation set of the true instance. Consequently, we may choose $\boldsymbol w_t^\star\in\mathcal W_L^\star(\boldsymbol\mu)$ such that $\|\boldsymbol w(t)-\boldsymbol w_t^\star\|\to0$. Together with the strong law and the standard continuity argument for Track-and-Stop statistics, this yields
\[
\left|
I_L\bigl(\boldsymbol w(t),\widehat{\boldsymbol\mu}(t)\bigr)
-
I_L(\boldsymbol w_t^\star,\boldsymbol\mu)
\right|
\longrightarrow
0
\quad\text{almost surely}.
\]
Since every $\boldsymbol w_t^\star$ maximizes the information rate, $I_L(\boldsymbol w_t^\star,\boldsymbol\mu)=1/T_L^\star(\boldsymbol\mu)$. Therefore,
\[
\frac{Z_L(t)}{t}
\longrightarrow
\frac{1}{T_L^\star(\boldsymbol\mu)}
\quad\text{almost surely}.
\]
Hence, for every $\eta\in(0,1)$, there exists an almost surely finite random time $t_\eta$ such that
\[
Z_L(t)
\geq
\frac{(1-\eta)t}{T_L^\star(\boldsymbol\mu)}
\]
for all $t\geq t_\eta$.

Let
\[
t_\delta
=
\left\lceil
\frac{1+2\eta}{1-\eta}
T_L^\star(\boldsymbol\mu)
\log\frac{1}{\delta}
\right\rceil.
\]
By Eq.~(\ref{eq:threshold-asymptotics}),
\[
\beta(t_\delta,\delta)
\leq
(1+\eta)
\log\frac{1}{\delta}
\]
for all sufficiently small $\delta$. Therefore, on the event $\{t_\eta\leq t_\delta\}$,
\[
Z_L(t_\delta)
\geq
(1+2\eta)
\log\frac{1}{\delta}
>
\beta(t_\delta,\delta)
\]
for all sufficiently small $\delta$. Since $t_\delta\to\infty$ and $t_\eta<\infty$ almost surely, $\mathbb P_{\boldsymbol\mu}(t_\eta\leq t_\delta)\to1$. It follows that
\[
\mathbb P_{\boldsymbol\mu}(\tau_\delta\leq t_\delta)
\longrightarrow1.
\]
Since $(1+2\eta)/(1-\eta)$ can be made arbitrarily close to one by choosing $\eta$ sufficiently small, relabeling the constant proves Eq.~(\ref{eq:afm-asymptotic-upper}).
\end{proof}

\clearpage

\bibliographystyle{plainnat}
\bibliography{ref}
\end{document}